\documentclass[letterpaper]{article}
\usepackage{algorithm}
\usepackage[noend]{algorithmic}
\usepackage{amsmath}
\usepackage{amssymb}
\usepackage{amsthm}
\usepackage{bbm}
\usepackage{bm}
\usepackage{color}
\usepackage{dsfont}
\usepackage{enumerate}
\usepackage{graphicx}
\usepackage{hyperref}
\usepackage[accepted]{icml2016}
\usepackage{natbib}
\usepackage{subfigure}
\usepackage{times}
\usepackage[usenames,dvipsnames]{xcolor}

\usepackage[capitalize]{cleveref}

\newcommand{\commentout}[1]{}
\newcommand{\junk}[1]{}

\newtheorem{theorem}{Theorem}

\newtheorem{lemma}{Lemma}
\newtheorem{proposition}{Proposition}

\newcommand{\cH}{\mathcal{H}}
\newcommand{\eps}{\varepsilon}

\newcommand{\condEE}[2]{\mathbb{E} \left[#1 \,\middle|\, #2\right]}

\newcommand{\EE}[1]{\mathbb{E} \left[#1\right]}

\newcommand{\kl}[2]{D_\mathrm{KL}(#1 \,\|\, #2)}

\newcommand{\rnd}[1]{\mathbf{#1}}
\newcommand{\set}[1]{\left\{#1\right\}}

\DeclareMathOperator*{\argmax}{arg\,max\,}

\mathchardef\mhyphen="2D

\newcommand{\cascadeklucb}{{\tt CascadeKL\mhyphen UCB}}

\newcommand{\dcmklucb}{{\tt dcmKL\mhyphen UCB}}
\newcommand{\expthree}{{\tt Exp3}}
\newcommand{\firstclick}{{\tt First\mhyphen Click}}
\newcommand{\klucb}{{\tt KL\mhyphen UCB}}
\newcommand{\lastclick}{{\tt Last\mhyphen Click}}
\newcommand{\rankedexpthree}{{\tt RankedExp3}}
\newcommand{\rankedklucb}{{\tt RankedKL\mhyphen UCB}}

\pdfinfo{
  /Author (Sumeet Katariya, Branislav Kveton, Csaba Szepesvari, and Zheng Wen)
  /Title (DCM Bandits: Learning to Rank with Multiple Clicks)
}

\begin{document}

\icmltitlerunning{DCM Bandits: Learning to Rank with Multiple Clicks}

\twocolumn[
\icmltitle{DCM Bandits: Learning to Rank with Multiple Clicks}
\icmlauthor{Sumeet Katariya}{katariya@wisc.edu}
\icmladdress{Department of Electrical and Computer Engineering, University of Wisconsin-Madison}
\icmlauthor{Branislav Kveton}{kveton@adobe.com}
\icmladdress{Adobe Research, San Jose, CA}
\icmlauthor{Csaba Szepesv\'ari}{szepesva@cs.ualberta.ca}
\icmladdress{Department of Computing Science, University of Alberta}
\icmlauthor{Zheng Wen}{zwen@adobe.com}
\icmladdress{Adobe Research, San Jose, CA}
\vskip 0.3in]

\begin{abstract}
A search engine recommends to the user a list of web pages. The user examines this list, from the first page to the last, and clicks on all attractive pages until the user is satisfied. This behavior of the user can be described by the \emph{dependent click model (DCM)}. We propose \emph{DCM bandits}, an online learning variant of the DCM where the goal is to maximize the probability of recommending satisfactory items, such as web pages. The main challenge of our learning problem is that we do not observe which attractive item is satisfactory. We propose a computationally-efficient learning algorithm for solving our problem, $\dcmklucb$; derive gap-dependent upper bounds on its regret under reasonable assumptions; and also prove a matching lower bound up to logarithmic factors. We evaluate our algorithm on synthetic and real-world problems, and show that it performs well even when our model is misspecified. This work presents the first practical and regret-optimal online algorithm for learning to rank with multiple clicks in a cascade-like click model.
\end{abstract}


\section{Introduction}
\label{sec:introduction}

Web pages in search engines are often ranked based on a model of user behavior, which is learned from click data \cite{radlinski05query,agichtein06improving,chuklin15click}. The cascade model \cite{craswell08experimental} is one of the most popular models of user behavior in web search. \citet{kveton15cascading} and \citet{combes15learning} recently proposed regret-optimal online learning algorithms for the cascade model. The main limitation of the cascade model is that it cannot model multiple clicks. Although the model was extended to multiple clicks \cite{chapelle09dynamic,guo09click,guo09efficient}, it is unclear if it is possible to design computationally and sample efficient online learning algorithms for these extensions.

In this work, we propose an online learning variant of the \emph{dependent click model (DCM)} \cite{guo09efficient}, which we call \emph{DCM bandits}. The DCM is a generalization of the cascade model where the user may click on multiple items. At time $t$, our learning agent recommends to the user a list of $K$ items. The user examines this list, from the first item to the last. If the examined item attracts the user, the user clicks on it. This is observed by the learning agent. After the user clicks on the item and investigates it, the user may leave or examine more items. If the user leaves, the DCM interprets this as that the user is satisfied and our agent receives a reward of one. If the user examines all items and does not leave on purpose, our agent receives a reward of zero. The goal of the agent is to maximize its total reward, or equivalently to minimize its cumulative regret with respect to the most satisfactory list of $K$ items. Our learning problem is challenging because the agent does not observe whether the user is satisfied. It only observes clicks. This differentiates our problem from cascading bandits \cite{kveton15cascading}, where the user can click on at most one item and this click is satisfactory.

We make four major contributions. First, we formulate an online learning variant of the DCM. Second, we propose a computationally-efficient learning algorithm for our problem under the assumption that the order of the termination probabilities in the DCM is known. Our algorithm is motivated by $\klucb$ \cite{garivier11klucb}, and therefore we call it $\dcmklucb$. Third, we prove two gap-dependent upper bounds on the regret of $\dcmklucb$ and a matching lower bound up to logarithmic factors. The key step in our analysis is a novel reduction to cascading bandits \cite{kveton15cascading}. Finally, we evaluate our algorithm on both synthetic and real-world problems, and compare it to several baselines. We observe that $\dcmklucb$ performs well even when our modeling assumptions are violated.

We denote random variables by boldface letters and write $[n]$ to denote $\set{1, \dots, n}$. For any sets $A$ and $B$, we denote by $A^B$ the set of all vectors whose entries are indexed by $B$ and take values from $A$.


\section{Background}
\label{sec:background}

\begin{figure}[t]
  \includegraphics[width=.9\columnwidth]{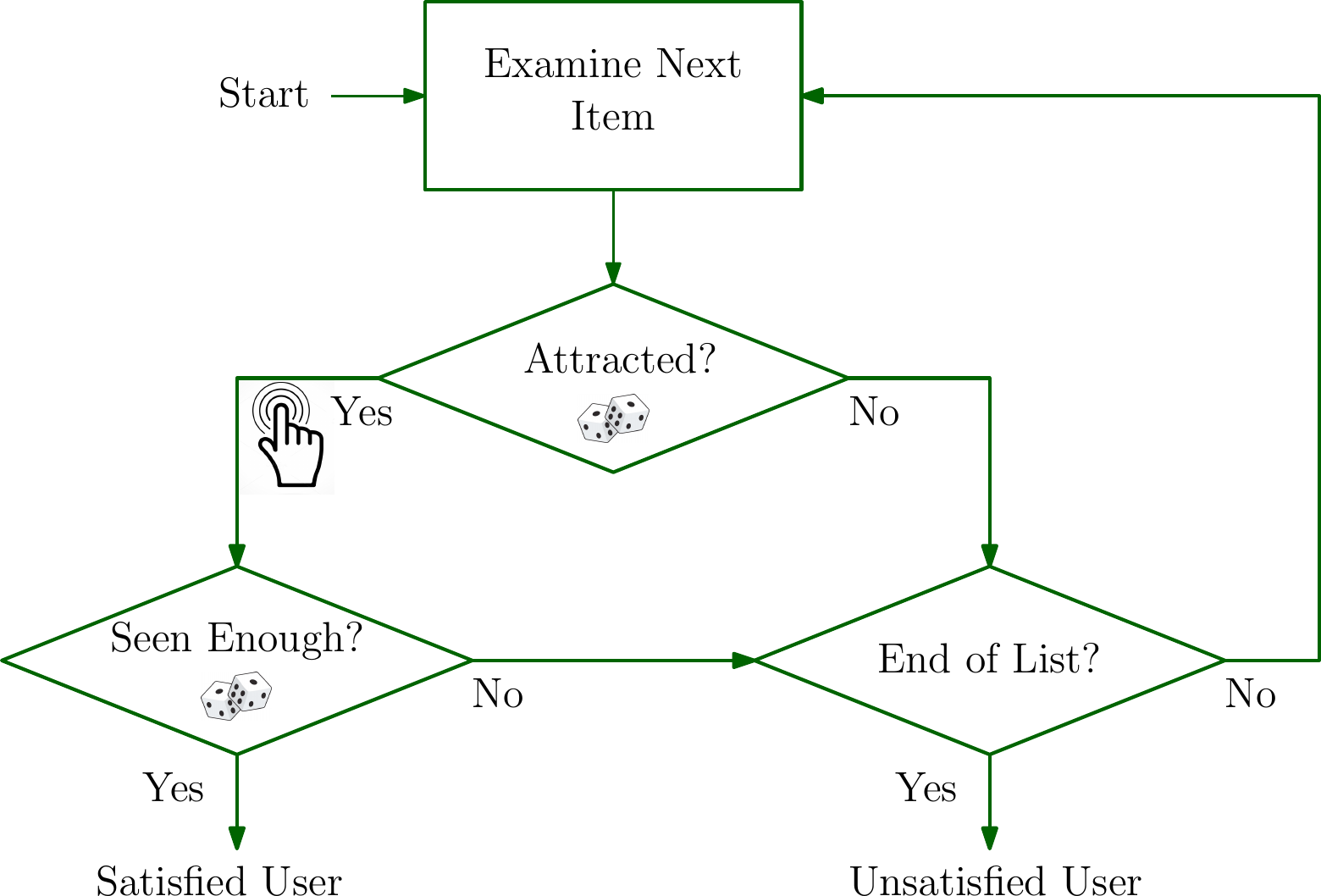}
  \caption{Interaction between the user and items in the DCM.}
  \label{fig:DCM}
\end{figure}

Web pages in search engines are often ranked based on a model of user behavior, which is learned from click data \cite{radlinski05query,agichtein06improving,chuklin15click}. We assume that the user scans a list of $K$ web pages $A = (a_1, \dots, a_K)$, which we call \emph{items}. These items belong to some \emph{ground set} $E = [L]$, such as the set of all possible web pages. Many models of user behavior in web search exist \cite{becker07modeling,richardson07predicting,craswell08experimental,chapelle09dynamic,guo09click,guo09efficient}. We focus on the dependent click model.

The \emph{dependent click model (DCM)} \cite{guo09efficient} is an extension of the cascade model \cite{craswell08experimental} to multiple clicks. The model assumes that the user scans a list of $K$ items $A = (a_1, \dots, a_K) \in \Pi_K(E)$ from the first item $a_1$ to the last $a_K$, where $\Pi_K(E) \subset E^K$ is the set of all \emph{$K$-permutations} of $E$. The DCM is parameterized by $L$ \emph{item-dependent attraction probabilities} $\bar{w} \in [0, 1]^E$ and $K$ \emph{position-dependent termination probabilities} $\bar{v} \in [0, 1]^K$. After the user \emph{examines} item $a_k$, the item \emph{attracts} the user with probability $\bar{w}(a_k)$. If the user is attracted by item $a_k$, the user clicks on the item and \emph{terminates} the search with probability $\bar{v}(k)$. If this happens, the user is \emph{satisfied} with item $a_k$ and does not examine any of the \emph{remaining} items. If item $a_k$ is not attractive or the user does not terminate, the user examines item $a_{k + 1}$. Our interaction model is visualized in Figure~\ref{fig:DCM}.

Before we proceed, we would like to stress the following. First, all probabilities in the DCM are independent of each other. Second, the probabilities $\bar{w}(a_k)$ and $\bar{v}(k)$ are \emph{conditioned} on the events that the user examines position $k$ and that the examined item is attractive, respectively. For simplicity of exposition, we drop ``conditional'' in this paper. Finally, $\bar{v}(k)$ is \emph{not} the probability that the user terminates at position $k$. This latter probability depends on the items and positions before position $k$.

It is easy to see that the probability that the user is satisfied with list $A = (a_1, \dots, a_K)$ is $1 - \prod_{k = 1}^{K} (1 - \bar{v}(k) \bar{w}({a_k}))$. This objective is maximized when the $k$-th most attractive item is placed at the $k$-th most terminating position.


\section{DCM Bandits}
\label{sec:DCM bandits}

We propose a learning variant of the dependent click model (\cref{sec:setting}) and a computationally-efficient algorithm for solving it (\cref{sec:algorithm}).

\subsection{Setting}
\label{sec:setting}

We refer to our learning problem as a \emph{DCM bandit}. Formally, we define it as a tuple $B = (E, P_\textsc{w}, P_\textsc{v}, K)$, where $E = [L]$ is a \emph{ground set} of $L$ items; $P_\textsc{w}$ and $P_\textsc{v}$ are probability distributions over $\set{0, 1}^E$ and $\set{0, 1}^K$, respectively; and $K \leq L$ is the number of recommended items.

The learning agent interacts with our problem as follows. Let $(\rnd{w}_t)_{t = 1}^n$ be $n$ i.i.d. \emph{attraction weights} drawn from distribution $P_\textsc{w}$, where $\rnd{w}_t \in \set{0, 1}^E$ and $\rnd{w}_t(e)$ indicates that item $e$ is attractive at time $t$; and let $(\rnd{v}_t)_{t = 1}^n$ be $n$ i.i.d. \emph{termination weights} drawn from $P_\textsc{v}$, where $\rnd{v}_t \in \set{0, 1}^K$ and $\rnd{v}_t(k)$ indicates that the user would terminate at position $k$ if the item at that position was examined and attractive. At time $t$, the learning agent recommends to the user a list of $K$ items $\rnd{A}_t = (\rnd{a}^t_1, \dots, \rnd{a}^t_K) \in \Pi_K(E)$. The user examines the items in the order in which they are presented and the agent receives observations $\rnd{c}_t \in \set{0, 1}^K$ that indicate the clicks of the user. Specifically, $\rnd{c}_t(k) = 1$ if and only if the user clicks on item $\rnd{a}^t_k$, the item at position $k$ at time $t$.

The learning agent also receives a \emph{binary reward} $\rnd{r}_t$, which is \emph{unobserved}. The reward is one if and only if the user is satisfied with at least one item in $\rnd{A}_t$. We say that item $e$ is \emph{satisfactory} at time $t$ when it is attractive, $\rnd{w}_t(e) = 1$, and its position leads to termination, $\rnd{v}_t(k) = 1$. The reward can be written as $\rnd{r}_t = f(\rnd{A}_t, \rnd{w}_t, \rnd{v}_t)$, where $f: \Pi_K(E) \times\allowbreak [0, 1]^E \times [0, 1]^K \to [0, 1]$ is a \emph{reward function}, which we define as
\begin{align*}
  f(A, w, v) = 1 - \prod_{k = 1}^K (1 - v(k) w(a_k))
\end{align*}
for any $A = (a_1, \dots, a_K) \in \Pi_K(E)$, $w \in [0, 1]^E$, and $v \in\allowbreak [0, 1]^K$. The above form is very useful in our analysis.

\citet{guo09efficient} assume that the attraction and termination weights in the DCM are drawn \emph{independently} of each other. We also adopt these assumptions. More specifically, we assume that for any $w \in \set{0, 1}^E$ and $v \in \set{0, 1}^K$,
\begin{align*}
  P_\textsc{w}(w)
  & = \prod\nolimits_{e \in E} \mathrm{Ber}(w(e); \bar{w}(e))\,, \\
  P_\textsc{v}(v)
  & = \prod\nolimits_{k \in [K]} \mathrm{Ber}(v(k); \bar{v}(k))\,,
\end{align*}
where $\mathrm{Ber}(\cdot; \theta)$ is a Bernoulli probability distribution with mean $\theta$. The above assumptions allow us to design a very efficient learning algorithm. In particular, they imply that the expected reward for list $A$, the probability that at least one item in $A$ is satisfactory, decomposes as
\begin{align*}
  \EE{f(A, \rnd{w}, \rnd{v})}
  & = 1 - \prod_{k = 1}^K (1 - \EE{\rnd{v}(k)} \EE{\rnd{w}(a_k)}) \\
  & = f(A, \bar{w}, \bar{v})
\end{align*}
and depends only on the attraction probabilities of items in $A$ and the termination probabilities $\bar{v}$. An analogous property proved useful in the design and analysis of algorithms for cascading bandits \cite{kveton15cascading}.

We evaluate the performance of a learning agent by its \emph{expected cumulative regret} $R(n) = \EE{\sum_{t = 1}^n R(\rnd{A}_t, \rnd{w}_t, \rnd{v}_t)}$, where $R(\rnd{A}_t, \rnd{w}_t, \rnd{v}_t) = f(A^\ast, \rnd{w}_t, \rnd{v}_t) - f(\rnd{A}_t, \rnd{w}_t, \rnd{v}_t)$ is the \emph{instantaneous regret} of the agent at time $t$ and
\begin{align*}
  \textstyle
  A^\ast = \argmax_{A \in \Pi_K(E)} f(A, \bar{w}, \bar{v})
\end{align*}
is the \emph{optimal list} of items, the list that maximizes the expected reward. Note that $A^\ast$ is the list of $K$ most attractive items, where the $k$-th most attractive item is placed at the $k$-th most terminating position. To simplify exposition, we assume that the optimal solution, as a set, is unique.

\subsection{Learning Without Accessing Rewards}
\label{sec:learning without accessing rewards}

Learning in DCM bandits is difficult because the observations $\rnd{c}_t$ are not sufficient to determine the reward $\rnd{r}_t$. We illustrate this problem on the following example. Suppose that the agent recommends $\rnd{A}_t = (1, 2, 3, 4)$ and observes $\rnd{c}_t = (0, 1, 1, 0)$. This feedback can be interpreted as follows. The first explanation is that item $1$ is not attractive, items $2$ and $3$ are, and the user does not terminate at position $3$. The second explanation is that item $1$ is not attractive, items $2$ and $3$ are, and the user terminates at position $3$. In the first case, the reward is zero. In the second case, the reward is one. Since the rewards are unobserved, DCM bandits are an instance of \emph{partial monitoring} (\cref{sec:related work}). However, general algorithms for partial monitoring are not suitable for DCM bandits because their number of actions is exponential in $K$. Therefore, we make an additional assumption that allows us to learn efficiently.

The key idea in our solution is based on the following insight. Without loss of generality, suppose that the termination probabilities satisfy $\bar{v}(1) \geq \ldots \geq\allowbreak \bar{v}(K)$. Then $A^\ast =\allowbreak \argmax_{A \in \Pi_K(E)} f(A, \bar{w}, \tilde{v})$ for any $\tilde{v} \in [0, 1]^K$ such that $\tilde{v}(1) \geq \ldots \geq \tilde{v}(K)$. Therefore, the \emph{termination probabilities} do not have to be learned if their \emph{order is known}, and we assume this in the rest of the paper. This assumption is much milder than knowing the probabilities. In \cref{sec:experiments}, we show that our algorithm performs well even when this order is misspecified.

Finally, we need one more insight. Let
\begin{align*}
  \rnd{C}^\mathrm{last}_t = \max \set{k \in [K]: \rnd{c}_t(k) = 1}
\end{align*}
denote the position of the \emph{last click}, where $\max \emptyset = + \infty$. Then $\rnd{w}_t(\rnd{a}^t_k) = \rnd{c}_t(k)$ for any $k \leq \min \set{\rnd{C}^\mathrm{last}_t, K}$. This means that the first $\min \set{\rnd{C}^\mathrm{last}_t, K}$ entries of $\rnd{c}_t$ represent the observations of $\rnd{w}_t$, which can be used to learn $\bar{w}$.

\subsection{$\dcmklucb$ Algorithm}
\label{sec:algorithm}

\begin{algorithm}[t]
  \caption{$\dcmklucb$ for solving DCM bandits.}
  \label{alg:ucb}
  \begin{algorithmic}
    \STATE // Initialization
    \STATE Observe $\rnd{w}_0 \sim P_\textsc{w}$
    \STATE $\forall e \in E: \rnd{T}_0(e) \gets 1$
    \STATE $\forall e \in E: \hat{\rnd{w}}_1(e) \gets \rnd{w}_0(e)$
    \STATE
    \FORALL{$t = 1, \dots, n$}
      \FORALL{$e = 1, \dots, L$}
        \STATE Compute UCB $\rnd{U}_t(e)$ using \eqref{eq:UCB}
      \ENDFOR
      \STATE
      \STATE // Recommend and observe
      \STATE $\rnd{A}_t \gets \argmax_{A \in \Pi_K(E)} f(A, \rnd{U}_t, \tilde{v})$
      \STATE Recommend $\rnd{A}_t$ and observe clicks $\rnd{c}_t \in \set{0, 1}^K$
      \STATE $\rnd{C}^\mathrm{last}_t \gets \max \set{k \in [K]: \rnd{c}_t(k) = 1}$
      \STATE
      \STATE // Update statistics
      \STATE $\forall e \in E: \rnd{T}_t(e) \gets \rnd{T}_{t - 1}(e)$
      \FORALL{$k = 1, \dots, \min \set{\rnd{C}^\mathrm{last}_t, K}$}
        \STATE $e \gets \rnd{a}^t_k$
        \STATE $\rnd{T}_t(e) \gets \rnd{T}_t(e) + 1$
        \STATE $\displaystyle \hat{\rnd{w}}_{\rnd{T}_t(e)}(e) \gets
        \frac{\rnd{T}_{t - 1}(e) \hat{\rnd{w}}_{\rnd{T}_{t - 1}(e)}(e) + \rnd{c}_t(k)}{\rnd{T}_t(e)}$
      \ENDFOR
    \ENDFOR
  \end{algorithmic}
\end{algorithm}

Our proposed algorithm, $\dcmklucb$, is described in \cref{alg:ucb}. It belongs to the family of UCB algorithms and is motivated by $\klucb$ \cite{garivier11klucb}. At time $t$, $\dcmklucb$ operates in three stages. First, it computes the \emph{upper confidence bounds (UCBs)} on the attraction probabilities of all items in $E$, $\rnd{U}_t \in [0, 1]^E$. The UCB of item $e$ at time $t$ is
\begin{align}
  \rnd{U}_t(e)
  & = \max \{q \in [w, 1]: w = \hat{\rnd{w}}_{\rnd{T}_{t - 1}(e)}(e)\,, \label{eq:UCB} \\
  & \phantom{{} = {}} \rnd{T}_{t - 1}(e) \kl{w}{q} \leq \log t + 3 \log \log t\}\,, \nonumber
\end{align}
where $\kl{p}{q}$ is the \emph{Kullback-Leibler (KL) divergence} between Bernoulli random variables with means $p$ and $q$; $\hat{\rnd{w}}_s(e)$ is the average of $s$ observed weights of item $e$; and $\rnd{T}_t(e)$ is the number of times that item $e$ is observed in $t$ steps. Since $\kl{p}{q}$ increases in $q$ for $q \geq p$, our UCB can be computed efficiently. After this, $\dcmklucb$ selects a list of $K$ items with largest UCBs
\begin{align*}
  \textstyle
  \rnd{A}_t = \argmax_{A \in \Pi_K(E)} f(A, \rnd{U}_t, \tilde{v})
\end{align*}
and recommends it, where $\tilde{v} \in [0, 1]^K$ is any vector whose entries are ordered in the same way as in $\bar{v}$. The selection of $\rnd{A}_t$ can be implemented efficiently in $O([L + K] \log K)$ time, by placing the item with the $k$-th largest UCB to the $k$-th most terminating position. Finally, after the user provides feedback $\rnd{c}_t$, $\dcmklucb$ updates its estimate of $\bar{w}(e)$ for any item $e$ up to position $\min \set{\rnd{C}^\mathrm{last}_t, K}$, as discussed in \cref{sec:learning without accessing rewards}.

$\dcmklucb$ is initialized with one sample of the attraction weight per item. Such a sample can be obtained in at most $L$ steps as follows \cite{kveton15cascading}. At time $t \in [L]$, item $t$ is placed at the first position. Since the first position in the DCM is always examined, $\rnd{c}_t(1)$ is guaranteed to be a sample of the attraction weight of item $t$.


\section{Analysis}
\label{sec:analysis}

This section is devoted to the analysis of DCM bandits. In \cref{sec:simple upper bound}, we analyze the regret of $\dcmklucb$ under the assumption that all termination probabilities are identical. This simpler case illustrates the key ideas in our proofs. In \cref{sec:upper bound}, we derive a general upper bound on the regret of $\dcmklucb$. In \cref{sec:lower bound}, we derive a lower bound on the regret in DCM bandits when all termination probabilities are identical. All supplementary lemmas are proved in \cref{sec:appendix}.

For simplicity of exposition and without loss of generality, we assume that the attraction probabilities of items satisfy $\bar{w}(1) \geq \ldots \geq \bar{w}(L)$ and that the termination probabilities of positions satisfy $\bar{v}(1) \geq \ldots \geq \bar{v}(K)$. In this setting, the \emph{optimal solution} is $A^\ast = (1, \dots, K)$. We say that item $e$ is \emph{optimal} when $e \in [K]$ and that item $e$ is \emph{suboptimal} when $e \in E \setminus [K]$. The \emph{gap} between the attraction probabilities of suboptimal item $e$ and optimal item $e^\ast$,
\begin{align*}
  \Delta_{e, e^\ast} = \bar{w}(e^\ast) - \bar{w}(e)\,,
\end{align*}
characterizes the hardness of discriminating the items. We also define the \emph{maximum attraction probability} as $p_{\max} = \bar{w}(1)$ and $\alpha = (1 - p_{\max})^{K - 1}$. In practice, $p_{\max}$ tends to be small and therefore $\alpha$ is expected to be large, unless $K$ is also large.

The key idea in our analysis is the reduction to cascading bandits \cite{kveton15cascading}. We define the \emph{cascade reward} for $i \in [K]$ recommended items as
\begin{align*}
  f_i(A, w) & = 1 - \prod_{k = 1}^{i} (1 - w(a_k))
\end{align*}
and the corresponding \emph{expected cumulative cascade regret} $R_i(n) = \EE{\sum_{t = 1}^n (f_i(A^\ast, \rnd{w}_t) - f_i(\rnd{A}_t, \rnd{w}_t))}$. We bound the cascade regret of $\dcmklucb$ below.

\begin{proposition}
\label{thm:cascade regret} For any $i \in [K]$ and $\eps > 0$, the expected $n$-step cascade regret of $\dcmklucb$ is bounded as
\begin{align*}
  R_i(n)
  & \leq \sum_{e = i + 1}^L
  \frac{(1 + \eps) \Delta_{e, i} (1 + \log(1 / \Delta_{e, i}))}{\kl{\bar{w}(e)}{\bar{w}(i)}} \times {} \\
  & \qquad\qquad\ \!\! (\log n + 3 \log \log n) + C\,,
\end{align*}
where $C = i L \frac{C_2(\eps)}{n^{\beta(\eps)}} + 7 i \log \log n$, and $C_2(\eps)$ and $\beta(\eps)$ are defined in \citet{garivier11klucb}.
\end{proposition}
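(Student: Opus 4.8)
The plan is to reduce the cascade regret of $\dcmklucb$ to the known regret analysis of $\cascadeklucb$ on a cascading bandit with $i$ positions, ground set $E$, and attraction probabilities $\bar{w}$. Three facts make this possible. First, since $\rnd{A}_t = \argmax_{A} f(A, \rnd{U}_t, \tilde{v})$ places the item with the $k$-th largest UCB at the $k$-th most terminating position, the first $i$ entries of $\rnd{A}_t$ are exactly the $i$ items with the largest UCBs $\rnd{U}_t$ --- the same list a length-$i$ $\cascadeklucb$ would recommend, and the list maximizing $f_i(\cdot, \rnd{U}_t)$. Second, $f_i(A, w)$ depends only on the attraction weights and on the first $i$ items of $A$, so $\EE{f_i(A, \rnd{w})} = f_i(A, \bar{w})$ is unaffected by the termination layer of the DCM; hence, on the event that $\rnd{U}_t(e) \ge \bar{w}(e)$ for every optimal $e \in [i]$, the usual UCB argument gives $f_i(\rnd{A}_t, \rnd{U}_t) \ge f_i(A^\ast, \rnd{U}_t) \ge f_i(A^\ast, \bar{w})$, so the expected instantaneous cascade regret is at most $f_i(\rnd{A}_t, \rnd{U}_t) - f_i(\rnd{A}_t, \bar{w})$, exactly as in cascading bandits. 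Third, the samples fed into $\hat{\rnd{w}}$ are genuine i.i.d. draws of $\bar{w}$, since $\rnd{c}_t(k) = \rnd{w}_t(\rnd{a}^t_k)$ for all $k \le \min\{\rnd{C}^{\mathrm{last}}_t, K\}$, so each $\rnd{U}_t(e)$ is a bona fide $\klucb$ index and the deviation bounds of \citet{garivier11klucb} apply verbatim.

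The one genuinely new ingredient is reconciling the feedback, since $\dcmklucb$ learns from the \emph{last} click while the cascading-bandit analysis is stated for the \emph{first} click. Let $\rnd{J}_t$ be the position of the first attractive item in $\rnd{A}_t$ ($+\infty$ if none). Nothing before position $\rnd{J}_t$ is attractive, so nothing terminates before it, so position $\rnd{J}_t$ is examined and clicked; thus $\rnd{C}^{\mathrm{last}}_t \ge \rnd{J}_t$, and $\dcmklucb$ observes every position in $\{1, \dots, \min\{\rnd{J}_t, i\}\}$ --- precisely the set a length-$i$ $\cascadeklucb$ reveals through its first-click feedback --- and possibly more. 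Consequently, for every suboptimal item $e$, the number of rounds up to $n$ in which $e$ lies among the first $i$ positions but is \emph{not} observed by $\dcmklucb$ is no larger than the corresponding quantity in the cascading bandit, so the ``recommended-but-unobserved'' bookkeeping in the cascading-bandit proof carries over, and the extra observations $\dcmklucb$ may collect can only shrink the confidence intervals.

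With these pieces in place, the rest is the $\cascadeklucb$ argument of \citet{kveton15cascading} transcribed to $R_i(n)$. Define the favorable event $\mathcal{G}_t = \{\rnd{U}_t(e) \ge \bar{w}(e) \text{ for all } e \in [i]\}$; a union bound over the $\klucb$ deviation inequality shows $\sum_t \Pr(\mathcal{G}_t^{\,\mathrm{c}})$ is absorbed into the $i L\, C_2(\eps)/n^{\beta(\eps)}$ part of $C$. On $\mathcal{G}_t$ one bounds the conditional expected instantaneous cascade regret by a sum over the suboptimal items recommended among the first $i$ positions and observed, each weighted by its gap, using the ordering of items by UCB together with the product form of $f_i$. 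Summing over $t$ and re-indexing by the suboptimal item $e \in \{i+1, \dots, L\}$ reduces the bound, for each $e$, to a gap-weighted count of the observations of $e$ whose empirical mean is still consistent with $\rnd{U}_t(e) \ge \bar{w}(i)$; the Garivier--Capp\'e deviation bound controls this count by $(1+\eps)(\log n + 3 \log\log n)/\kl{\bar{w}(e)}{\bar{w}(i)}$ up to the additive $C_2(\eps)/n^{\beta(\eps)}$ and $\log\log n$ remainders, while the standard peeling over the possible values of the gap supplies the factor $\Delta_{e, i}(1 + \log(1/\Delta_{e, i}))$. Collecting the remainder terms over the $i$ positions yields $C = i L\, C_2(\eps)/n^{\beta(\eps)} + 7 i \log\log n$.

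The main obstacle is the feedback reconciliation of the second paragraph: one must check that replacing first-click by last-click feedback leaves every step of the cascading-bandit decomposition valid --- in particular that no suboptimal item is ``charged'' regret in a round where $\dcmklucb$ fails to observe it --- and that the (possibly larger) observation counts do not disturb the Garivier--Capp\'e bound. Once this is settled, everything else is a routine rerun of the cascading-bandit proof with $K$ replaced by $i$.
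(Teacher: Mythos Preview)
Your proposal is correct and follows exactly the paper's approach: the paper's proof is a two-sentence sketch noting that the confidence radii match those of $\cascadeklucb$ and that, for any $\rnd{A}_t$ and $\rnd{w}_t$, $\dcmklucb$ observes at least as many entries of $\rnd{w}_t$ as $\cascadeklucb$ would, so the analysis of Theorem~3 in \citet{kveton15cascading} carries over verbatim. Your second paragraph is precisely a fleshed-out justification of that ``at least as many observations'' claim (via $\rnd{C}^{\mathrm{last}}_t \ge \rnd{J}_t$), and the remainder of your write-up simply replays the cascading-bandit argument with $K$ replaced by $i$, as the paper intends.
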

\begin{proof}
The proof is identical to that of Theorem 3 in \citet{kveton15cascading} for the following reason. Our confidence radii have the same form as those in $\cascadeklucb$; and for any $\rnd{A}_t$ and $\rnd{w}_t$, $\dcmklucb$ is guaranteed to observe at least as many entries of $\rnd{w}_t$ as $\cascadeklucb$.
\end{proof}

To simplify the presentation of our proofs, we introduce \emph{or} function $V: [0, 1]^K \to [0, 1]$, which is defined as $V(x) = 1 - \prod_{k = 1}^K (1 - x_k)$. For any vectors $x$ and $y$ of length $K$, we write $x \geq y$ when $x_k \geq y_k$ for all $k \in [K]$. We denote the component-wise product of vectors $x$ and $y$ by $x \odot y$, and the restriction of $x$ to $A \in \Pi_K(E)$ by $x|_{A}$. The latter has precedence over the former. The expected reward can be written in our new notation as $f(A, \bar{w}, \bar{v}) = V(\bar{w}|_A \odot \bar{v})$.

\subsection{Equal Termination Probabilities}
\label{sec:simple upper bound}

Our first upper bound is derived under the assumption that all terminations probabilities are the same. The main steps in our analysis are the following two lemmas, which relate our objective to a linear function.

\begin{lemma}
\label{lem:linear upper bound} Let $x, y \in [0, 1]^K$ satisfy $x \geq y$. Then
\begin{align*}
  V(x) - V(y) \leq \sum_{k = 1}^K x_k - \sum_{k = 1}^K y_k\,.
\end{align*}
\end{lemma}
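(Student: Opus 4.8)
The plan is to prove the inequality by a one-coordinate-at-a-time hybrid (telescoping) argument, exploiting the product structure of $V$ together with the fact that $V$ is $1$-Lipschitz in each coordinate on $[0,1]^K$. In words: replacing a coordinate $y_k$ by the larger value $x_k$ increases $V$ by exactly the coordinate difference $x_k - y_k$ scaled by a product of $(1-\cdot)$ factors that lies in $[0,1]$, so each such replacement costs at most $x_k - y_k$.

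First I would introduce interpolating points between $y$ and $x$. For $j = 0, 1, \dots, K$, let $z^{(j)} \in [0,1]^K$ be the vector with $z^{(j)}_k = x_k$ for $k \le j$ and $z^{(j)}_k = y_k$ for $k > j$; then $z^{(0)} = y$, $z^{(K)} = x$, and each $z^{(j)}$ lies in $[0,1]^K$ since $x$ and $y$ do. Telescoping gives $V(x) - V(y) = \sum_{j=1}^{K} ( V(z^{(j)}) - V(z^{(j-1)}) )$.

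Next I would evaluate a single term. The hybrids $z^{(j)}$ and $z^{(j-1)}$ agree in every coordinate except the $j$-th, where $z^{(j)}_j = x_j \ge y_j = z^{(j-1)}_j$. Writing $P_j = \prod_{k \ne j} (1 - z^{(j)}_k)$ for the common ``other coordinates'' product --- which equals $\prod_{k \ne j}(1 - z^{(j-1)}_k)$ because the two hybrids coincide off coordinate $j$ --- the definition of $V$ yields $V(z^{(j)}) - V(z^{(j-1)}) = P_j(1 - y_j) - P_j(1 - x_j) = P_j(x_j - y_j)$. Since $P_j$ is a product of numbers in $[0,1]$ we have $0 \le P_j \le 1$, and $x_j - y_j \ge 0$, so $V(z^{(j)}) - V(z^{(j-1)}) \le x_j - y_j$. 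Summing over $j \in [K]$ gives $V(x) - V(y) \le \sum_{k=1}^{K} (x_k - y_k) = \sum_{k=1}^{K} x_k - \sum_{k=1}^{K} y_k$, as claimed.

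I do not expect a genuine obstacle; the only point requiring care is the bookkeeping that $P_j$ is truly common to both hybrids, so that the difference collapses to the single term $P_j(x_j - y_j)$ rather than leaving cross terms. As a sanity check and alternative route, one can instead read $V(x)$ as the probability that at least one of $K$ independent events with probabilities $x_1, \dots, x_K$ occurs, couple the $y$-events into the $x$-events so that each $y$-event implies the corresponding $x$-event, and observe that $\{$some $x$-event occurs but no $y$-event occurs$\} \subseteq \bigcup_{k=1}^{K} \{x\text{-event }k\text{ holds},\ y\text{-event }k\text{ fails}\}$, whose probability is $\sum_k (x_k - y_k)$; this union bound gives exactly the same inequality.
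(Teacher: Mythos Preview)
Your proof is correct and follows essentially the same idea as the paper's: both rest on the fact that $V$ is $1$-Lipschitz in each coordinate, since changing coordinate $j$ from $y_j$ to $x_j$ changes $V$ by exactly $(x_j-y_j)\prod_{k\ne j}(1-\,\cdot_k)\le x_j-y_j$. The paper packages this by introducing $d(x)=\sum_k x_k - V(x)$ and verifying $\partial_i d(x)=1-\prod_{k\ne i}(1-x_k)\ge 0$ (and, unnecessarily for the lemma, that $d\ge 0$ by induction), while you carry out the equivalent coordinate-by-coordinate telescoping explicitly; your version is arguably the more direct of the two.
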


\begin{lemma}
\label{lem:linear lower bound} Let $x, y \in [0, p_{\max}]^K$ satisfy $x \geq y$. Then
\begin{align*}
  \alpha \left[\sum_{k = 1}^K x_k - \sum_{k = 1}^K y_k\right] \leq V(x) - V(y)\,,
\end{align*}
where $\alpha = (1 - p_{\max})^{K - 1}$.
\end{lemma}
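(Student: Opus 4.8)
The plan is to write $V(x) - V(y)$ as a telescoping sum in which coordinates are switched from $y_k$ to $x_k$ one at a time, and then to lower bound each increment. Concretely, for $j = 0, 1, \dots, K$ I would introduce the hybrid vector $z^{(j)} \in [0, p_{\max}]^K$ defined by $z^{(j)}_k = x_k$ for $k \leq j$ and $z^{(j)}_k = y_k$ for $k > j$, so that $z^{(0)} = y$ and $z^{(K)} = x$. These lie in $[0,p_{\max}]^K$ because $x$ and $y$ do. Then
\begin{align*}
  V(x) - V(y) = \sum_{j = 1}^K \left(V(z^{(j)}) - V(z^{(j-1)})\right).
\end{align*}

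Next I would evaluate a single increment. Since $z^{(j)}$ and $z^{(j-1)}$ differ only in coordinate $j$, writing $V(z) = 1 - (1 - z_j) \prod_{k \neq j}(1 - z_k)$ and noting that the product over $k \neq j$ is identical for the two vectors gives
\begin{align*}
  V(z^{(j)}) - V(z^{(j-1)}) = (x_j - y_j) \prod_{k < j}(1 - x_k) \prod_{k > j}(1 - y_k).
\end{align*}
Here $x_j \geq y_j$, so the leading factor is nonnegative; and each of the $K - 1$ remaining factors is of the form $1 - (\text{something in } [0, p_{\max}])$, hence at least $1 - p_{\max}$. Therefore each increment is bounded below by $(1 - p_{\max})^{K - 1}(x_j - y_j) = \alpha (x_j - y_j)$, and summing the $K$ increments yields $V(x) - V(y) \geq \alpha \sum_{j=1}^K (x_j - y_j)$, which is the claim.

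I do not expect a real obstacle here: the only points requiring care are the bookkeeping of indices in the telescoping product, and the observation that it is precisely the hypothesis $x, y \in [0, p_{\max}]^K$ (rather than merely $[0,1]^K$) that makes each of the $K-1$ spared factors at least $1 - p_{\max}$, so that their product is at least $\alpha$. The same telescoping identity, combined with the trivial bound that each product of $(1 - z_k)$ terms is at most $1$, also proves \cref{lem:linear upper bound}.
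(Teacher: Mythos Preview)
Your proof is correct and rests on the same core inequality as the paper's, namely that $\prod_{k \neq j}(1 - z_k) \geq (1 - p_{\max})^{K-1}$ whenever $z \in [0, p_{\max}]^K$. The paper packages this as the nonnegativity of $\frac{\partial}{\partial x_j}\big[V(x) - \alpha \sum_k x_k\big]$, i.e.\ coordinate-wise monotonicity of $d(x) = V(x) - \alpha\sum_k x_k$, whereas you package it as a lower bound on each term of a discrete telescoping sum obtained by switching one coordinate at a time from $y$ to $x$; these are the finite-difference and infinitesimal versions of the same monotonicity argument. Your route is arguably a touch cleaner here, since it avoids the separate induction the paper carries out to show $d(x) \geq 0$ (which is not actually needed for the inequality $d(x) \geq d(y)$ once monotonicity is established).
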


Now we present the main result of this section.

\begin{theorem}
\label{thm:simple upper bound} Let $\bar{v}(k) = \gamma$ for all $k \in [K]$ and $\eps > 0$. Then the expected $n$-step regret of $\dcmklucb$ is bounded as
\begin{align*}
  R(n)
  & \leq \frac{\gamma}{\alpha} \sum_{e = K + 1}^L
  \frac{(1 + \eps) \Delta_{e, K} (1 + \log(1 / \Delta_{e, K}))}{\kl{\bar{w}(e)}{\bar{w}(K)}} \times {} \\
  & \hspace{0.73in} (\log n + 3 \log \log n) + C\,,
\end{align*}
where $C = \frac{\gamma}{\alpha} \left(K L \frac{C_2(\eps)}{n^{\beta(\eps)}} + 7 K \log \log n\right)$, and $C_2(\eps)$ and $\beta(\eps)$ are from \cref{thm:cascade regret}.
\end{theorem}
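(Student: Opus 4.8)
The plan is to reduce the DCM regret to the cascade regret $R_K(n)$ and then invoke \cref{thm:cascade regret} with $i = K$. The core of the argument is the per-step inequality
\begin{align*}
  f(A^\ast, \bar{w}, \bar{v}) - f(A, \bar{w}, \bar{v})
  \leq \frac{\gamma}{\alpha} \left(f_K(A^\ast, \bar{w}) - f_K(A, \bar{w})\right),
\end{align*}
valid for \emph{every} list $A \in \Pi_K(E)$. Since $\bar{v}(k) = \gamma$ for all $k$, the left-hand side equals $V(\gamma \bar{w}|_{A^\ast}) - V(\gamma \bar{w}|_A)$ and the cascade gap equals $V(\bar{w}|_{A^\ast}) - V(\bar{w}|_A)$; crucially, both $V$-values are unchanged when the entries within a list are permuted, because here neither $f$ (with constant $\bar v$) nor $f_K$ depends on the order of the items.

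The key observation is that, after sorting each list by attraction probability, the weights of $A^\ast = (1, \dots, K)$ dominate those of $A$ componentwise. Write $A$ in decreasing order of attraction probability as $(b_1, \dots, b_K)$. For each $k$, the $k$-th largest attraction probability among the $K$ items of $A$ is at most $\bar{w}(k)$, since the maximum over all $k$-subsets of $E$ of the smallest attraction probability in the subset is attained by $\{1, \dots, k\}$. Hence, setting $x = (\bar{w}(1), \dots, \bar{w}(K))$ and $y = (\bar{w}(b_1), \dots, \bar{w}(b_K))$, we have $x \geq y$ with $x, y \in [0, p_{\max}]^K$ and $\gamma x, \gamma y \in [0,1]^K$. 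Now \cref{lem:linear upper bound} applied to $\gamma x \geq \gamma y$ gives $V(\gamma x) - V(\gamma y) \leq \gamma\big(\sum_k x_k - \sum_k y_k\big)$, while \cref{lem:linear lower bound} applied to $x \geq y$ gives $\sum_k x_k - \sum_k y_k \leq \frac{1}{\alpha}\big(V(x) - V(y)\big)$. Chaining the two, together with the permutation-invariance noted above, yields the per-step inequality (note also $f_K(A^\ast, \bar{w}) \geq f_K(A, \bar{w})$, so the right-hand side is nonnegative).

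To finish, I sum the per-step inequality over $t = 1, \dots, n$ with $A = \rnd{A}_t$ and take expectations. Since $\rnd{A}_t$ is independent of $(\rnd{w}_t, \rnd{v}_t)$ and expectations factor over the independent coordinates, $R(n) = \EE{\sum_{t=1}^n \left(f(A^\ast, \bar{w}, \bar{v}) - f(\rnd{A}_t, \bar{w}, \bar{v})\right)}$ and likewise $R_K(n) = \EE{\sum_{t=1}^n \left(f_K(A^\ast, \bar{w}) - f_K(\rnd{A}_t, \bar{w})\right)}$, so $R(n) \leq \frac{\gamma}{\alpha} R_K(n)$. Substituting the bound of \cref{thm:cascade regret} with $i = K$ gives precisely the claimed expression, with $C = \frac{\gamma}{\alpha}\big(K L \frac{C_2(\eps)}{n^{\beta(\eps)}} + 7 K \log \log n\big)$. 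The only non-routine step is the sandwich argument of the previous paragraph — in particular recognizing that the componentwise domination must be read off after re-sorting each list, which is legitimate exactly because $f$ and $f_K$ are permutation-invariant when $\bar{v}$ is constant; the remaining manipulations are bookkeeping.
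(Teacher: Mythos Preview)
Your proposal is correct and follows essentially the same route as the paper: use permutation invariance of $V$ (available because $\bar v\equiv\gamma$) to get componentwise domination, apply \cref{lem:linear upper bound} to $\gamma x\geq\gamma y$ and \cref{lem:linear lower bound} to $x\geq y$, chain to obtain $R(n)\leq\frac{\gamma}{\alpha}R_K(n)$, and finish with \cref{thm:cascade regret} at $i=K$. The only cosmetic difference is how the domination is argued---the paper permutes $A^\ast$ so that optimal items appearing in $\rnd A_t$ are matched positionwise, whereas you sort both lists in decreasing order of attraction probability; both are valid and lead to the same inequality.
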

\begin{proof}
Let $\rnd{R}_t = R(\rnd{A}_t, \rnd{w}_t, \rnd{v}_t)$ be the stochastic regret at time $t$ and
\begin{align*}
  \cH_t = (\rnd{A}_1, \rnd{c}_1, \dots, \rnd{A}_{t - 1}, \rnd{c}_{t - 1}, \rnd{A}_t)
\end{align*}
be the \emph{history} of the learning agent up to choosing list $\rnd{A}_t$, the first $t - 1$ observations and $t$ actions. By the tower rule, we have $R(n) = \sum_{t = 1}^n \EE{\condEE{\rnd{R}_t}{\cH_t}}$, where
\begin{align*}
  \condEE{\rnd{R}_t}{\cH_t}
  & = f(A^\ast, \bar{w}, \bar{v}) - f(\rnd{A}_t, \bar{w}, \bar{v}) \\
  & = V(\bar{w}|_{A^\ast} \odot \bar{v}) - V(\bar{w}|_{\rnd{A}_t} \odot \bar{v})\,.
\end{align*}
Now note that the items in $A^\ast$ can be permuted such that any optimal item in $\rnd{A}_t$ matches the corresponding item in $A^\ast$, since $\bar{v}(k) = \gamma$ for all $k \in [K]$ and $V(x)$ is invariant to the permutation of $x$. Then $\bar{w}|_{A^\ast} \odot \bar{v} \geq \bar{w}|_{\rnd{A}_t} \odot \bar{v}$ and we can bound $\condEE{\rnd{R}_t}{\cH_t}$ from above by \cref{lem:linear upper bound}. Now we apply \cref{lem:linear lower bound} and get
\begin{align*}
  \condEE{\rnd{R}_t}{\cH_t}
  & \leq \gamma \left[\sum_{k = 1}^K \bar{w}(a^\ast_k) - \sum_{k = 1}^K \bar{w}(\rnd{a}^t_k)\right] \\
  & \leq \frac{\gamma}{\alpha} \left[f_K(A^\ast, \bar{w}) - f_K(\rnd{A}_t, \bar{w})\right]\,.
\end{align*}
By the definition of $R(n)$ and from the above inequality, it follows that
\begin{align*}
  R(n) \leq
  \frac{\gamma}{\alpha} \sum_{t = 1}^n \EE{f_K(A^\ast, \bar{w}) - f_K(\rnd{A}_t, \bar{w})} =
  \frac{\gamma}{\alpha} R_K(n)\,.
\end{align*}
Finally, we bound $R_K(n)$ using \cref{thm:cascade regret}.
\end{proof}

\subsection{General Upper Bound}
\label{sec:upper bound}

Our second upper bound holds for any termination probabilities. Recall that we still assume that $\dcmklucb$ knows the order of these probabilities. To prove our upper bound, we need one more supplementary lemma.

\begin{lemma}
\label{lem:decreasing order} Let $x \in [0, 1]^K$ and $x' $ be the permutation of $x$ whose entries are in decreasing order, $x'_1 \geq \ldots \geq x'_K$. Let the entries of $c \in [0, 1]^K$ be in decreasing order. Then
\begin{align*}
  V(c \odot x') - V(c \odot x) \leq \sum_{k = 1}^K c_k x'_k - \sum_{k = 1}^K c_k x_k\,.
\end{align*}
\end{lemma}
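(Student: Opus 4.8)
The plan is to prove the inequality by showing that the ``error term'' $\left(\sum_{k=1}^K c_k x'_k - \sum_{k=1}^K c_k x_k\right) - \left(V(c \odot x') - V(c \odot x)\right)$ is nonnegative, and to do this by interpolating between $x$ and $x'$ via adjacent transpositions. Since $x'$ is obtained from $x$ by sorting, we can write $x' = \sigma(x)$ for some permutation $\sigma$ that is a product of transpositions of adjacent positions, each of which moves the sequence ``closer'' to sorted order (a standard bubble-sort decomposition). So it suffices to prove the statement when $x$ and $x'$ differ by a single adjacent swap at positions $i, i+1$ with $x_i < x_{i+1}$ (so that swapping increases the dot product with the decreasing vector $c$, by the rearrangement inequality, giving $c_i x_{i+1} + c_{i+1} x_i \geq c_i x_i + c_{i+1} x_{i+1}$), and then chain the inequalities along the decomposition. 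I would need to check that the intermediate vectors stay in $[0,1]^K$, which is immediate since they are always permutations of $x$.

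For the single-swap case, I would exploit the product structure of $V$. Write $V(c \odot y) = 1 - \prod_{k} (1 - c_k y_k)$, so that $V(c \odot x') - V(c \odot x) = \prod_k (1 - c_k x_k) - \prod_k (1 - c_k x'_k)$. Factor out the common terms: both products agree in all coordinates $k \notin \{i, i+1\}$, so let $P = \prod_{k \neq i, i+1} (1 - c_k x_k) \in [0,1]$. Then the left-hand side equals $P\big[(1 - c_i x_i)(1 - c_{i+1} x_{i+1}) - (1 - c_i x_{i+1})(1 - c_{i+1} x_i)\big]$. Expanding, the constant and the cross terms cancel nicely, and one finds the bracket equals $(c_i + c_{i+1})(x_{i+1} - x_i) \cdot (\text{something}) $ — more precisely, a short computation gives $(1 - c_i x_i)(1 - c_{i+1} x_{i+1}) - (1 - c_i x_{i+1})(1 - c_{i+1} x_i) = (c_i x_{i+1} + c_{i+1} x_i) - (c_i x_i + c_{i+1} x_{i+1}) = (c_i - c_{i+1})(x_{i+1} - x_i)$. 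Hence $V(c \odot x') - V(c \odot x) = P \cdot (c_i - c_{i+1})(x_{i+1} - x_i)$, whereas the right-hand side $\sum_k c_k x'_k - \sum_k c_k x_k = (c_i - c_{i+1})(x_{i+1} - x_i)$. Since $c_i \geq c_{i+1}$ and $x_{i+1} \geq x_i$ in this case, both quantities are nonnegative, and the claim follows from $P \leq 1$.

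Finally I would assemble the pieces: the per-swap inequality says $V$-difference $\leq$ linear-difference for each adjacent transposition that sorts; summing telescopically over the bubble-sort decomposition from $x$ up to $x'$ — noting that the linear functional $y \mapsto \sum_k c_k y_k$ telescopes exactly, and the $V$-terms telescope as well — yields $V(c \odot x') - V(c \odot x) \leq \sum_k c_k x'_k - \sum_k c_k x_k$. The main obstacle, such as it is, is bookkeeping: making sure the adjacent-transposition decomposition of the sorting permutation consists only of ``improving'' swaps (those where the left entry is smaller than the right), so that each step's inequality points the correct way; this is exactly the content of bubble sort, so it is routine but worth stating carefully. The algebraic identity in the single-swap case is the crux, and it is a two-line expansion.
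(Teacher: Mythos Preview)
Your proof is correct and follows essentially the same approach as the paper: decompose the passage from $x$ to $x'$ into a sequence of inversion-fixing transpositions, verify the inequality for a single swap via the identity $(1-c_i x_i)(1-c_j x_j)-(1-c_i x_j)(1-c_j x_i)=(c_i-c_j)(x_j-x_i)$ together with the bound $P\le 1$ on the remaining product, and telescope. The only cosmetic difference is that the paper does not restrict to adjacent indices---it picks any $i<j$ with $x_i<x_j$---so your bubble-sort decomposition is a valid but unnecessary specialization.
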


Now we present our most general upper bound.

\begin{theorem}
\label{thm:upper bound} Let $\bar{v}(1) \geq \ldots \geq \bar{v}(K)$ and $\eps > 0$. Then the expected $n$-step regret of $\dcmklucb$ is bounded as
\begin{align*}
  & R(n) \leq
  (1 + \eps) \sum_{i = 1}^K \frac{\bar{v}(i) - \bar{v}(i + 1)}{\alpha} \times {} \\
  & \ \sum_{e = i + 1}^L \frac{\Delta_{e, i} (1 + \log(1 / \Delta_{e, i}))}{\kl{\bar{w}(e)}{\bar{w}(i)}}
  (\log n + 3 \log \log n) + C\,,
\end{align*}
where $\bar{v}(K + 1) = 0$, $C = \sum_{i = 1}^K \frac{\bar{v}(i) - \bar{v}(i + 1)}{\alpha} \Big(i L \frac{C_2(\eps)}{n^{\beta(\eps)}} +\allowbreak 7 i \log \log n\Big)$, and $C_2(\eps)$ and $\beta(\eps)$ are from \cref{thm:cascade regret}.
\end{theorem}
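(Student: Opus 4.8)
The plan is to follow the template of \cref{thm:simple upper bound} but replace the single scalar $\gamma/\alpha$ by a weighted sum over $i \in [K]$, weighted by $(\bar{v}(i) - \bar{v}(i + 1))/\alpha$, of the cascade regrets $R_i(n)$. Concretely, I will prove the pointwise bound
\[
  \condEE{\rnd{R}_t}{\cH_t} \;\leq\; \sum_{i = 1}^K \frac{\bar{v}(i) - \bar{v}(i + 1)}{\alpha}\bigl(f_i(A^\ast, \bar{w}) - f_i(\rnd{A}_t, \bar{w})\bigr),
\]
where $\condEE{\rnd{R}_t}{\cH_t} = V(\bar{w}|_{A^\ast} \odot \bar{v}) - V(\bar{w}|_{\rnd{A}_t} \odot \bar{v})$ as in the proof of \cref{thm:simple upper bound}. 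Once this is available, summing over $t$, taking expectations, and using $\condEE{f_i(A^\ast, \rnd{w}_t) - f_i(\rnd{A}_t, \rnd{w}_t)}{\cH_t} = f_i(A^\ast, \bar{w}) - f_i(\rnd{A}_t, \bar{w})$ (multilinearity of $f_i$ in $w$ and independence of the weights, exactly as for $f$ in \cref{sec:setting}) gives $R(n) \leq \sum_{i = 1}^K \frac{\bar{v}(i) - \bar{v}(i + 1)}{\alpha} R_i(n)$, and substituting the bound of \cref{thm:cascade regret} for each $R_i(n)$ produces the stated estimate, with $C$ assembling exactly as written (note $\bar{v}(i) - \bar{v}(i + 1) \geq 0$ since $\bar{v}$ is decreasing and $\bar{v}(K + 1) = 0$).

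To establish the pointwise bound, the obstacle is that $\rnd{A}_t$ is sorted by decreasing UCB, not by decreasing attraction probability, so unlike the equal-termination case one cannot simply permute $A^\ast$ to align with $\rnd{A}_t$. I would introduce $\rnd{A}_t'$, the list with the same items as $\rnd{A}_t$ but ordered so that $\bar{w}|_{\rnd{A}_t'}$ is decreasing. Applying \cref{lem:decreasing order} with $c = \bar{v}$, $x = \bar{w}|_{\rnd{A}_t}$, $x' = \bar{w}|_{\rnd{A}_t'}$ controls $V(\bar{v} \odot \bar{w}|_{\rnd{A}_t'}) - V(\bar{v} \odot \bar{w}|_{\rnd{A}_t})$ by $\sum_k \bar{v}(k)(\bar{w}|_{\rnd{A}_t'})_k - \sum_k \bar{v}(k) \bar{w}(\rnd{a}^t_k)$. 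Since $A^\ast = (1, \dots, K)$ and $\rnd{A}_t'$ are both sorted decreasingly and the $k$-th largest attraction probability inside the $K$-subset $\{\rnd{a}^t_1, \dots, \rnd{a}^t_K\}$ is at most $\bar{w}(k)$, we get $\bar{w}|_{A^\ast} \odot \bar{v} \geq \bar{w}|_{\rnd{A}_t'} \odot \bar{v}$ componentwise, so \cref{lem:linear upper bound} bounds $V(\bar{w}|_{A^\ast} \odot \bar{v}) - V(\bar{w}|_{\rnd{A}_t'} \odot \bar{v})$ by $\sum_k \bar{v}(k) \bar{w}(k) - \sum_k \bar{v}(k)(\bar{w}|_{\rnd{A}_t'})_k$. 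Adding the two inequalities, the $\bar{w}|_{\rnd{A}_t'}$ terms telescope and we obtain $\condEE{\rnd{R}_t}{\cH_t} \leq \sum_{k = 1}^K \bar{v}(k)\bigl(\bar{w}(k) - \bar{w}(\rnd{a}^t_k)\bigr)$.

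Finally I would Abel-sum over positions: writing $\bar{v}(k) = \sum_{i = 1}^K (\bar{v}(i) - \bar{v}(i + 1))\I{k \leq i}$ turns the last display into $\sum_{i = 1}^K (\bar{v}(i) - \bar{v}(i + 1)) \sum_{k = 1}^i (\bar{w}(k) - \bar{w}(\rnd{a}^t_k))$. For each fixed $i$, $\sum_{k = 1}^i \bar{w}(\rnd{a}^t_k)$ depends only on the set $\{\rnd{a}^t_1, \dots, \rnd{a}^t_i\}$, whose $k$-th largest attraction probability is again at most $\bar{w}(k)$; sorting that set, padding both length-$i$ vectors with zeros to length $K$, and invoking \cref{lem:linear lower bound} (its factor $(1 - p_{\max})^{i - 1}$ dominates $\alpha$, and $f_i$ is permutation-invariant so the reordering is harmless) shows $\sum_{k = 1}^i (\bar{w}(k) - \bar{w}(\rnd{a}^t_k)) \leq \frac{1}{\alpha}(f_i(A^\ast, \bar{w}) - f_i(\rnd{A}_t, \bar{w}))$, which yields the pointwise bound and completes the proof. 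I expect the main obstacle to be this last block of bookkeeping: correctly lining up the two componentwise orderings needed for \cref{lem:linear upper bound} and \cref{lem:linear lower bound} after the reorderings, and verifying the signs so that the Abel summation and the telescoping cancellation are valid; the remaining steps are routine substitution of \cref{thm:cascade regret}.
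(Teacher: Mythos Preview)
Your proposal is correct and follows essentially the same route as the paper: introduce the reordered list $\rnd{A}_t'$, split $\condEE{\rnd{R}_t}{\cH_t}$ into the $A^\ast$-to-$\rnd{A}_t'$ and $\rnd{A}_t'$-to-$\rnd{A}_t$ pieces, apply \cref{lem:linear upper bound} and \cref{lem:decreasing order} respectively to obtain $\sum_k \bar{v}(k)(\bar{w}(a^\ast_k)-\bar{w}(\rnd{a}^t_k))$, Abel-sum, and then invoke \cref{lem:linear lower bound} and \cref{thm:cascade regret}. Your extra care in justifying the componentwise ordering before each lemma (the order-statistic argument and the observation that $(1-p_{\max})^{i-1}\ge\alpha$) fills in details the paper leaves implicit, but the structure is identical.
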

\begin{proof}
Let $\rnd{R}_t$ and $\cH_t$ be defined as in the proof of \cref{thm:simple upper bound}. The main challenge in this proof is that we cannot apply \cref{lem:linear upper bound} as in the proof of \cref{thm:simple upper bound}, because we cannot guarantee that $\bar{w}|_{A^\ast} \odot \bar{v} \geq \bar{w}|_{\rnd{A}_t} \odot \bar{v}$ when the termination probabilities are not identical. To overcome this problem, we rewrite $\condEE{\rnd{R}_t}{\cH_t}$ as
\begin{align*}
  \condEE{\rnd{R}_t}{\cH_t} 
  & = [V(\bar{w}|_{A^\ast} \odot \bar{v}) - V(\bar{w}|_{\rnd{A}'_t} \odot \bar{v})] + {} \\
  & \phantom{{} = {}} [V(\bar{w}|_{\rnd{A}'_t} \odot \bar{v}) - V(\bar{w}|_{\rnd{A}_t} \odot \bar{v})]\,,
\end{align*}
where $\rnd{A}'_t$ is the permutation of $\rnd{A}_t$ where all items are in the decreasing order of their attraction probabilities. From the definitions of $A^\ast$ and $\rnd{A}_t'$, $\bar{w}|_{A^\ast} \odot \bar{v} \geq \bar{w}|_{\rnd{A}'_t} \odot \bar{v}$, and we can apply \cref{lem:linear upper bound} to bound the first term above. We bound the other term by \cref{lem:decreasing order} and get
\begin{align*}
  \condEE{\rnd{R}_t}{\cH_t} 
  & \leq \sum_{k = 1}^K \bar{v}(k) (\bar{w}(a^\ast_k) - \bar{w}(\rnd{a}^t_k)) \\
  & = \sum_{i = 1}^K [\bar{v}(i) - \bar{v}(i + 1)] \sum_{k = 1}^i (\bar{w}(a^\ast_k) - \bar{w}(\rnd{a}^t_k))\,,
\end{align*}
where we define $\bar{v}(K + 1) = 0$. Now we bound each term $\sum_{k = 1}^i (\bar{w}(a^\ast_k) - \bar{w}(\rnd{a}^t_k))$ by \cref{lem:linear lower bound}, and get from the definitions of $R(n)$ and $R_i(n)$ that
\begin{align*}
  R(n) \leq
  \sum_{i = 1}^K \frac{\bar{v}(i) - \bar{v}(i + 1)}{\alpha} R_i(n)\,.
\end{align*}
Finally, we bound each $R_i(n)$ using \cref{thm:cascade regret}.
\end{proof}

Note that when $\bar{v}(k) = \gamma$ for all $k \in [K]$, the above upper bound reduces to that in \cref{thm:simple upper bound}.


\subsection{Lower Bound}
\label{sec:lower bound}

Our lower bound is derived on the following class of problems. The ground set are $L$ items $E = [L]$ and $K$ of these items are optimal, $A^\ast \subseteq \Pi_K(E)$. The attraction probabilities of items are defined as
\begin{align*}
  \bar{w}(e) =
  \begin{cases}
    p & e \in A^\ast \\
    p - \Delta & \text{otherwise}\,,
  \end{cases}
\end{align*}
where $p$ is a common attraction probability of the optimal items, and $\Delta$ is the gap between the attraction probabilities of the optimal and suboptimal items. The number of positions is $K$ and their termination probabilities are identical, $\bar{v}(k) = \gamma$ for all positions $k \in [K]$. We denote an instance of our problem by $B_\mathrm{LB}(L, A^\ast, p, \Delta, \gamma)$; and parameterize it by $L$, $A^\ast$, $p$, $\Delta$, and $\gamma$. The key step in the proof of our lower bound is the following lemma.

\begin{lemma}
\label{lem:regret lower bound} Let $x, y \in [0, 1]^K$ satisfy $x \geq y$. Let $\gamma \in [0, 1]$. Then $V(\gamma x) - V(\gamma y) \geq \gamma [V(x) - V(y)]$.
\end{lemma}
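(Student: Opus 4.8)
The plan is to expand $V$ explicitly and exploit a telescoping identity for the difference of two products. Since $V(z) = 1 - \prod_{k=1}^K(1-z_k)$, we have
\[
V(\gamma x) - V(\gamma y) = \prod_{k=1}^K(1-\gamma y_k) - \prod_{k=1}^K(1-\gamma x_k).
\]
Applying the standard identity $\prod_k a_k - \prod_k b_k = \sum_{j=1}^K\bigl(\prod_{k<j}b_k\bigr)(a_j - b_j)\bigl(\prod_{k>j}a_k\bigr)$ with $a_k = 1-\gamma y_k$ and $b_k = 1-\gamma x_k$, and noting $a_j - b_j = \gamma(x_j - y_j)$, one obtains the factorization
\[
V(\gamma x) - V(\gamma y) = \gamma \sum_{j=1}^K (x_j - y_j)\prod_{k<j}(1-\gamma x_k)\prod_{k>j}(1-\gamma y_k) =: \gamma\,\psi(\gamma).
\]

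Next I would show $\psi$ is nonnegative and nonincreasing on $[0,1]$. Each factor $x_j - y_j$ is $\ge 0$ by $x \ge y$; and since $x_k, y_k \in [0,1]$, each factor $1-\gamma x_k$ and $1-\gamma y_k$ lies in $[0,1]$ and is nonincreasing in $\gamma$, so the products $\prod_{k<j}(1-\gamma x_k)\prod_{k>j}(1-\gamma y_k)$ are nonnegative and nonincreasing in $\gamma$; hence so is $\psi$. Therefore $\psi(\gamma) \ge \psi(1)$ for every $\gamma \in [0,1]$. Evaluating the identity at $\gamma = 1$ gives $\psi(1) = V(x) - V(y)$, so $V(\gamma x) - V(\gamma y) = \gamma\,\psi(\gamma) \ge \gamma\,\psi(1) = \gamma[V(x) - V(y)]$, as claimed; the case $\gamma = 0$ is trivial since both sides vanish.

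The one subtlety worth flagging: the tempting shortcut of setting $g(\gamma) := V(\gamma x) - V(\gamma y) - \gamma[V(x)-V(y)]$, observing $g(0)=g(1)=0$, and arguing $g$ is concave hence nonnegative does \emph{not} work, because $g$ is a difference of two concave functions of $\gamma$ and need not be concave. What rescues an elementary argument is precisely the factored form $V(\gamma x)-V(\gamma y) = \gamma\,\psi(\gamma)$ with $\psi$ monotone. Consequently the only place requiring care is bookkeeping the indices in the telescoping sum so that the surviving products are exactly $\prod_{k<j}(1-\gamma x_k)\prod_{k>j}(1-\gamma y_k)$ — both manifestly nonnegative and nonincreasing in $\gamma$ — after which everything is routine.
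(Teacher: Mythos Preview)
Your argument is correct. The telescoping identity is standard and you have applied it with the right choice of $a_k,b_k$; the monotonicity of $\psi$ in $\gamma$ follows exactly as you say, since each summand is a nonnegative constant times a product of factors in $[0,1]$ that are individually nonincreasing in $\gamma$.

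The paper proves the lemma differently: it argues by induction on $K$, peeling off the last factor and writing
\[
Y_K^\gamma - X_K^\gamma = (1-\gamma y_K)(Y_{K-1}^\gamma - X_{K-1}^\gamma) + \gamma(x_K - y_K) X_{K-1}^\gamma,
\]
then lower-bounding each piece using the induction hypothesis and the obvious inequalities $1-\gamma y_K \geq 1-y_K$ and $X_{K-1}^\gamma \geq X_{K-1}$. Your telescoping route is essentially the ``unrolled'' version of that recursion: the induction yields, step by step, exactly the summands in your $\psi(\gamma)$. What your approach buys is that the structure $V(\gamma x)-V(\gamma y) = \gamma\,\psi(\gamma)$ with $\psi$ nonincreasing is made explicit, so the comparison to $\gamma=1$ is a one-line monotonicity argument rather than an inductive bookkeeping exercise; the paper's induction, on the other hand, avoids writing down the full telescoping sum and may feel more self-contained to a reader unfamiliar with that identity. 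Both are short and elementary.
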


Our lower bound is derived for consistent algorithms as in \citet{lai85asymptotically}. We say that the algorithm is \emph{consistent} if for any DCM bandit, any suboptimal item $e$, and any $\alpha > 0$, $\EE{\rnd{T}_n(e)} = o(n^\alpha)$; where $\rnd{T}_n(e)$ is the number of times that item $e$ is observed in $n$ steps, the item is placed at position $\rnd{C}^\mathrm{last}_t$ or higher for all $t \leq n$. Our lower bound is derived below.

\begin{theorem}
\label{thm:lower bound} For any DCM bandit $B_\mathrm{LB}$, the regret of any consistent algorithm is bounded from below as
\begin{align*}
  \liminf_{n \to \infty} \frac{R(n)}{\log n} \geq
  \gamma \alpha \frac{(L - K) \Delta}{\kl{p - \Delta}{p}}\,.
\end{align*}
\end{theorem}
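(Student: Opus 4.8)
The plan is to establish the matching asymptotic lower bound by the classical change-of-measure argument of \citet{lai85asymptotically}, using \cref{lem:regret lower bound} and \cref{lem:linear lower bound} to pass between the combinatorial reward $V$ and a linear surrogate — this is precisely what makes the instance-dependent factors $\gamma$ and $\alpha$ surface. \emph{Step 1 (reduce regret to counts of suboptimal items).} Let $\cH_t$ be the history as in the proof of \cref{thm:simple upper bound}. Since $\bar{v}(k)=\gamma$ for all $k$, $\condEE{\rnd{R}_t}{\cH_t}=V(\gamma\,\bar{w}|_{A^\ast})-V(\gamma\,\bar{w}|_{\rnd{A}_t})$; permuting $A^\ast$ so that each optimal item of $\rnd{A}_t$ is matched to itself makes $\bar{w}|_{A^\ast}\ge\bar{w}|_{\rnd{A}_t}$ entrywise. \cref{lem:regret lower bound} then extracts $\gamma$ and \cref{lem:linear lower bound} (all entries are at most $p_{\max}=p$) extracts $\alpha$:
\[
  \condEE{\rnd{R}_t}{\cH_t}\ \ge\ \gamma\alpha\sum_{k=1}^K\big(\bar{w}(a^\ast_k)-\bar{w}(\rnd{a}^t_k)\big)\ =\ \gamma\alpha\Delta\sum_{e\notin A^\ast}\I{e\in\rnd{A}_t}\,,
\]
since each suboptimal item in $\rnd{A}_t$ costs exactly $\Delta$. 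An item can be observed only when it is in the list, so summing over $t$ yields $R(n)\ge\gamma\alpha\Delta\sum_{e\notin A^\ast}(\EE{\rnd{T}_n(e)}-1)$, and it suffices to prove $\liminf_n\EE{\rnd{T}_n(e)}/\log n\ge 1/\kl{p-\Delta}{p}$ for each suboptimal $e$ and sum the $L-K$ bounds.

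\emph{Step 2 (a per-item lower bound via change of measure).} Fix a suboptimal $e$ and, for small $\delta>0$, let $B'$ be the DCM bandit identical to $B_\mathrm{LB}$ except that $\bar{w}(e)=p+\delta$, so that $e$ becomes optimal under $B'$. Consistency applied to $B'$ forces the $L-K$ items that are suboptimal under $B'$ to be observed $o(n^\lambda)$ times for every $\lambda>0$; since under $B'$ each position of the list is reached by the user with probability at least (roughly) $\alpha>0$ and suboptimal items are attractive with constant probability, this also controls how often those items merely appear in the recommended list, and hence forces item $e$ to be examined $\Omega(n)$ times under $B'$. The divergence decomposition of the $n$-step observation law, together with the standard Lai--Robbins argument applied to the event $\set{\rnd{T}_n(e)\le(1-\eps)\log n/\kl{p-\Delta}{p+\delta}}$, then gives $\liminf_n\EE{\rnd{T}_n(e)}/\log n\ge 1/\kl{p-\Delta}{p+\delta}$; letting $\delta\to 0$ and using continuity of the KL divergence finishes Step 2, and hence the theorem.

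\emph{Main obstacle.} The technically delicate point is the partial-monitoring feedback inside the divergence computation: the value $\rnd{w}_t(e)$ is revealed only in the random rounds in which the user examines $e$'s position — an event that is independent of $\rnd{w}_t(e)$ itself but entangled with the remainder of the click vector — so the per-round relative entropy equals $\kl{p-\Delta}{p+\delta}$ weighted only by the probability that $e$'s position is reached. The decomposition therefore produces the number of rounds in which $e$ is \emph{examined}, which one must sandwich between $\EE{\rnd{T}_n(e)}$ and the in-list count used in Step 1, and then show to be $\Omega(n)$ under $B'$; both steps rely on the uniform lower bound $\alpha$ on a position's examination probability and on suboptimal items having positive attraction probability (so one must implicitly assume $0<p-\Delta<p<1$). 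Everything that remains is the classical one-parameter Lai--Robbins calculation.
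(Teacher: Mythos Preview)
Your proposal is correct and follows essentially the same route as the paper: apply \cref{lem:regret lower bound} to extract the factor $\gamma$, then \cref{lem:linear lower bound} to extract $\alpha$ and reduce the conditional regret to $\Delta$ times the number of suboptimal items in $\rnd{A}_t$, lower-bound this count by the observation count, and finish with the Lai--Robbins change-of-measure bound $\liminf_n \EE{\rnd{T}_n(e)}/\log n \ge 1/\kl{p-\Delta}{p}$ for each suboptimal $e$. The paper's own proof is terser on Step~2---it simply cites \citet{lai85asymptotically}---whereas you spell out the alternative instance $B'$ and correctly flag the partial-monitoring subtlety that the divergence decomposition naturally produces the \emph{examination} count rather than $\rnd{T}_n(e)$; since the in-list count from Step~1 dominates the examination count, the argument still closes.
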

\begin{proof}
The key idea of the proof is to reduce our problem to a cascading bandit. By the tower rule and \cref{lem:regret lower bound}, the $n$-step regret in DCM bandits is bounded from below as
\begin{align*}
  R(n) \geq \gamma \EE{\sum_{t = 1}^n (f_K(A^\ast, \rnd{w}_t) - f_K(\rnd{A}_t, \rnd{w}_t))}\,.
\end{align*}
Moreover, by the tower rule and \cref{lem:linear lower bound}, we can bound the $n$-step regret in cascading bandits from below as
\begin{align*}
  R(n)
  & \geq \gamma \alpha
  \EE{\sum_{t = 1}^n \left(\sum_{k = 1}^K \rnd{w}_t(a^\ast_k) - \sum_{k = 1}^K \rnd{w}_t(\rnd{a}^t_k)\right)} \\
  & \geq \gamma \alpha \Delta \sum_{e = K + 1}^L \EE{\rnd{T}_n(e)}\,,
\end{align*}
where the last step follows from the facts that the expected regret for recommending any suboptimal item $e$ is $\Delta$, and that the number of times that this item is recommended in $n$ steps is bounded from below by $\rnd{T}_n(e)$. Finally, for any consistent algorithm and item $e$,
\begin{align*}
  \liminf_{n \to \infty} \frac{\EE{\rnd{T}_n(e)}}{\log n} \geq
  \frac{\Delta}{\kl{p - \Delta}{p}}\,,
\end{align*}
by the same argument as in \citet{lai85asymptotically}. Otherwise, the algorithm would not be able to distinguish some instances of $B_\mathrm{LB}$ where item $e$ is optimal, and would have $\Omega(n^\alpha)$ regret for some $\alpha > 0$ on these problems. Finally, we chain the above two inequalities and this completes our proof.
\end{proof}


\subsection{Discussion}
\label{sec:discussion}

We derive two gap-dependent upper bounds on the $n$-step regret of $\dcmklucb$, under the assumptions that all termination probabilities are identical (\cref{thm:simple upper bound}) and that their order is known (\cref{thm:upper bound}). Both bounds are logarithmic in $n$, linear in the number of items $L$, and decrease as the number of recommended items $K$ increases. The bound in \cref{thm:simple upper bound} grows linearly with $\gamma$, the common termination probability at all positions. Since smaller $\gamma$ result in more clicks, we show that the regret decreases with more clicks. This is in line with our expectation that it is easier to learn from more feedback.

The upper bound in \cref{thm:simple upper bound} is tight on problem $B_{\text{LB}}(L,\allowbreak A^\ast = [K], p = 1 / K, \Delta, \gamma)$ from \cref{sec:lower bound}. In this problem, $1 / \alpha \leq e$ and $1 / e \leq \alpha$ when $p = 1 / K$; and then the upper bound in \cref{thm:simple upper bound} and the lower bound in \cref{thm:lower bound} reduce to
\begin{align*}
  & \textstyle O\left(\gamma (L - K) \frac{\Delta (1 + \log(1 / \Delta))}{\kl{p - \Delta}{p}} \log n\right)\,, \\
  & \textstyle \Omega\left(\gamma (L - K) \frac{\Delta}{\kl{p - \Delta}{p}} \log n\right)\,,
\end{align*}
respectively. The bounds match up to $\log(1 / \Delta)$.


\section{Experiments}
\label{sec:experiments}

We conduct three experiments. In \cref{sec:regret bound experiment}, we validate that the regret of $\dcmklucb$ scales as suggested by \cref{thm:simple upper bound}. In \cref{sec:baseline experiment}, we compare $\dcmklucb$ to multiple baselines. Finally, in \cref{sec:real-world experiment}, we evaluate $\dcmklucb$ on a real-world dataset.

\subsection{Regret Bounds}
\label{sec:regret bound experiment}

\begin{figure*}[t]
  \centering
  \includegraphics[width=2.2in, bb=2.75in 4.5in 5.75in 6.5in]{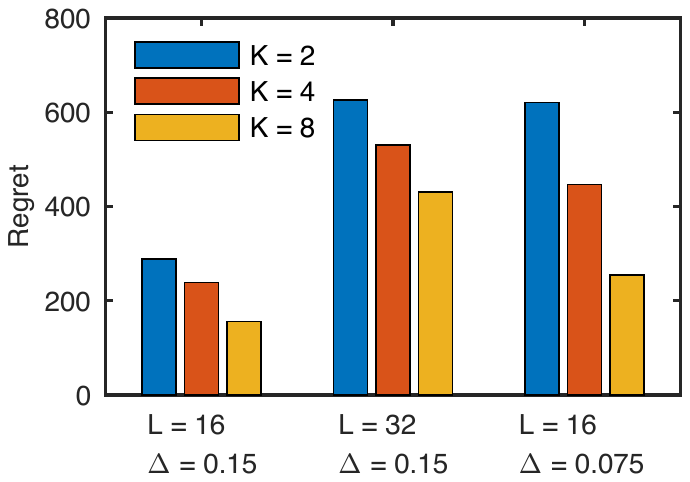}
  \includegraphics[width=2.2in, bb=2.75in 4.5in 5.75in 6.5in]{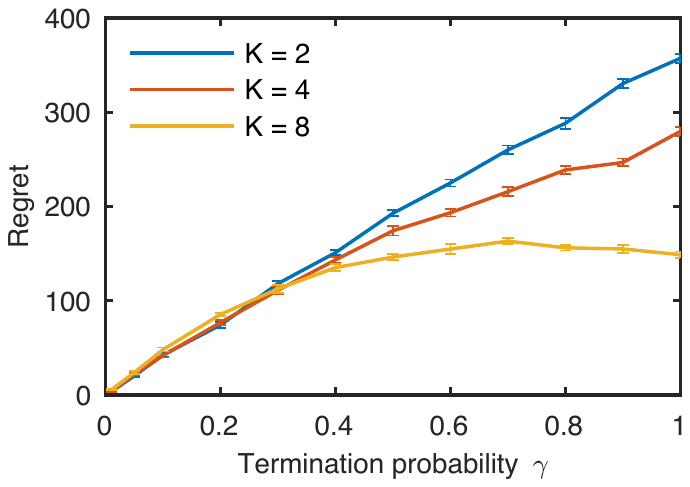}
  \includegraphics[width=2.2in, bb=2.75in 4.5in 5.75in 6.5in]{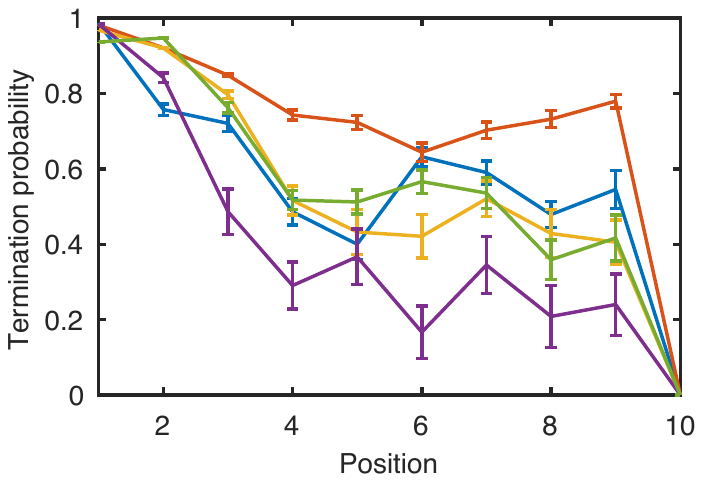} \\
  \hspace{0.07in} (a) \hspace{2in} (b) \hspace{2in} (c) \vspace{-0.1in}
  \caption{\textbf{a}. The $n$-step regret of $\dcmklucb$ in $n = 10^5$ steps on the problem in \cref{sec:regret bound experiment}. All results are averaged over $20$ runs. \textbf{b}. The $n$-step regret of $\dcmklucb$ as a function of the common termination probability $\gamma$ and $K$. \textbf{c}. The termination probabilities in the DCMs of $5$ most frequent queries in the Yandex dataset.}
\label{fig:1}
\end{figure*}

\begin{figure*}[t]
  \centering
  \includegraphics[width=2.2in, bb=2.75in 4.5in 5.75in 6.5in]{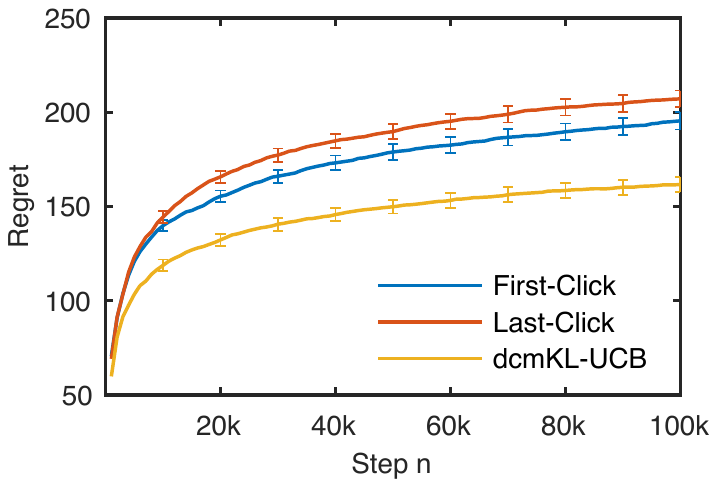}
  \includegraphics[width=2.2in, bb=2.75in 4.5in 5.75in 6.5in]{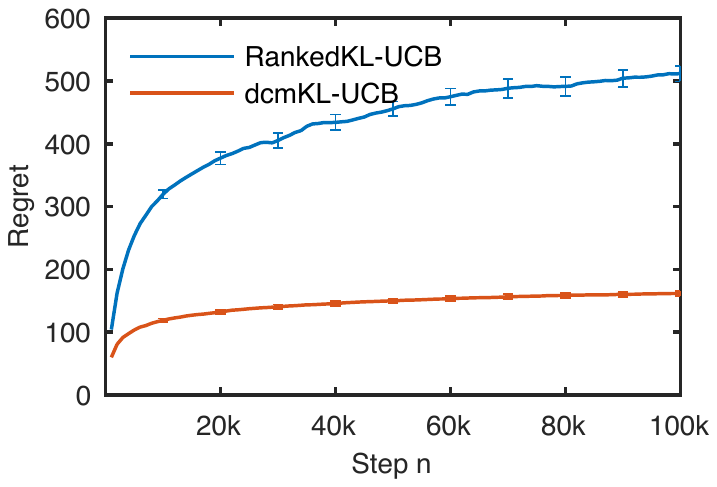}
  \includegraphics[width=2.2in, bb=2.75in 4.5in 5.75in 6.5in]{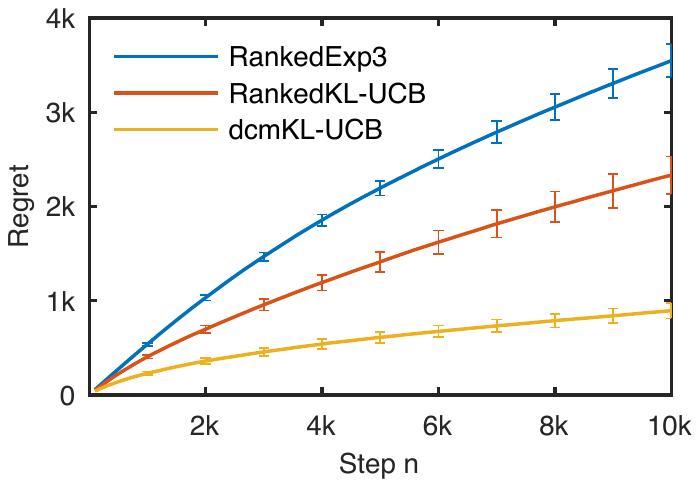} \\
  \hspace{0.07in} (a) \hspace{2in} (b) \hspace{2in} (c) \vspace{-0.1in}
  \caption{\textbf{a}. The $n$-step regret of $\dcmklucb$ and two heuristics on the problem in \cref{sec:baseline experiment}. \textbf{b}. The $n$-step regret of $\dcmklucb$ and $\rankedklucb$ on the same problem. \textbf{c}. The $n$-step regret of $\dcmklucb$, $\rankedklucb$, and $\rankedexpthree$ in the Yandex dataset.}
  \label{fig:2}
\end{figure*}

In the first experiment, we validate the behavior of our upper bound in \cref{thm:simple upper bound}. We experiment with the class of problems $B_{\text{LB}}(L, A^\ast = [K], p = 0.2, \Delta, \gamma)$, which is presented in \cref{sec:lower bound}. We vary $L$, $K$, $\Delta$, and $\gamma$; and report the regret of $\dcmklucb$ in $n = 10^5$ steps.

Figure~\ref{fig:1}a shows the $n$-step regret of $\dcmklucb$ as a function of $L$, $K$, and $\Delta$ for $\gamma = 0.8$. We observe three trends. First, the regret increases when the number of items $L$ increases. Second, the regret decreases when the number of recommended items $K$ increases. These dependencies are suggested by our $O(L - K)$ upper bound. Finally, we observe that the regret increases when $\Delta$ decreases. 

Figure~\ref{fig:1}b shows the $n$-step regret of $\dcmklucb$ as a function of $\gamma$ and $K$, for $L = 16$ and $\Delta = 0.15$. We observe that the regret grows linearly with $\gamma$, as suggested by \cref{thm:simple upper bound}, when $p < 1 / K$. This trend is less prominent when $p > 1 / K$. We believe that this is because the upper bound in \cref{thm:simple upper bound} is loose when $\alpha = (1 - p)^{K - 1}$ is small, and this happens when $p$ is large.

\subsection{First Click, Last Click, and Ranked Bandits}
\label{sec:baseline experiment}

In the second experiment, we compare $\dcmklucb$ to two single-click heuristics and ranked bandits (\cref{sec:related work}). The heuristics are motivated by $\cascadeklucb$, which learns from a single click \cite{kveton15cascading}. The first heuristic is $\dcmklucb$ where the feedback $\rnd{c}_t$ is altered such that it contains only the first click. This method can be viewed as a conservative extension of $\cascadeklucb$ to multiple clicks and we call it $\firstclick$. The second heuristic is $\dcmklucb$ where the feedback $\rnd{c}_t$ is modified such that it contains only the last click. This method was suggested by \citet{kveton15cascading} and we call it $\lastclick$. We also compare $\dcmklucb$ to $\rankedklucb$, which is a ranked bandit with $\klucb$. The base algorithm in $\rankedklucb$ is the same as in $\dcmklucb$, and therefore we believe that this comparison is fair. All methods are evaluated on problem $B_{\text{LB}}(L = 16, A^\ast = [4], p = 0.2, \Delta = 0.15, \gamma = 0.5)$ from \cref{sec:lower bound}.

The regret of $\dcmklucb$, $\firstclick$, and $\lastclick$ is shown in Figure~\ref{fig:2}a. The regret of $\dcmklucb$ is clearly the lowest among all compared methods. We conclude that $\dcmklucb$ outperforms both baselines because it does not discard or misinterpret any feedback in $\rnd{c}_t$.

The regret of $\rankedklucb$ and $\dcmklucb$ is reported in Figure~\ref{fig:2}b. We observe that the regret of $\rankedklucb$ is three times higher than that of $\dcmklucb$. Note that $K =\allowbreak 4$. Therefore, this validates our hypothesis that $\dcmklucb$ can learn $K$ times faster than a ranked bandit, because the regret of $\dcmklucb$ is $O(L - K)$ (\cref{sec:discussion}) while the regret in ranked bandits is $O(K L)$ (\cref{sec:related work}).

\subsection{Real-World Experiment}
\label{sec:real-world experiment}

In the last experiment, we evaluate $\dcmklucb$ on the \emph{Yandex} dataset \cite{yandex}, a search log of $35\text{M}$ search sessions. In each query, the user is presented $10$ web pages and may click on multiple pages. We experiment with $20$ most frequent queries from our dataset and estimate one DCM per query, as in \citet{guo09efficient}. We compare $\dcmklucb$ to $\rankedklucb$ (\cref{sec:baseline experiment}) and $\rankedexpthree$. The latter is a ranked bandit with $\expthree$, which can learn correlations among recommended positions. We parameterize $\expthree$ as suggested in \citet{auer95gambling}. All compared algorithms assume that higher ranked positions are more valuable, as this would be expected in practice. This is not necessarily true in our DCMs (Figure~\ref{fig:1}c). However, this assumption is quite reasonable because most of our DCMs have the following structure. The first position is the most terminating and the most attractive item tends to be much more attractive than the other items. Therefore, any solution that puts the most attractive item at the first position performs well. All methods are evaluated by their average regret over all $20$ queries, with $5$ runs per query. 

Our results are reported in Figure~\ref{fig:2}c and we observe that $\dcmklucb$ outperforms both ranked bandits. At $n = 10\text{k}$, for instance, the regret of $\dcmklucb$ is at least two times lower than that of our best baseline. This validates our hypothesis that $\dcmklucb$ can learn much faster than ranked bandits (\cref{sec:baseline experiment}), even in practical problems where the model of the world is likely to be misspecified.


\section{Related Work}
\label{sec:related work}

Our work is closely related to \emph{cascading bandits} \cite{kveton15cascading,combes15learning}. Cascading bandits are an online learning variant of the cascade model of user behavior in web search \cite{craswell08experimental}. \citet{kveton15cascading} proposed a learning algorithm for these problems, $\cascadeklucb$; bounded its regret; and proved a matching lower bound up to logarithmic factors. The main limitation of cascading bandits is that they cannot learn from multiple clicks. DCM bandits are a generalization of cascading bandits that allows multiple clicks.

\emph{Ranked bandits} are a popular approach in learning to rank \cite{radlinski08learning,slivkins13ranked}. The key idea in ranked bandits is to model each position in the recommended list as a separate bandit problem, which is solved by some \emph{base bandit algorithm}. In general, the algorithms for ranked bandits learn $(1 - 1 / e)$ approximate solutions and their regret is $O(K L)$, where $L$ is the number of items and $K$ is the number of recommended items. We compare $\dcmklucb$ to ranked bandits in \cref{sec:experiments}.

DCM bandits can be viewed as a partial monitoring problem where the reward, the satisfaction of the user, is unobserved. Unfortunately, general algorithms for partial monitoring \cite{agrawal89asymptotically,bartok12adaptive,bartok12partial,bartok14partial} are not suitable for DCM bandits because their number of actions is exponential in the number of recommended items $K$. \citet{lin14combinatorial} and \citet{kveton15combinatorial} proposed algorithms for combinatorial partial monitoring. The feedback models in these algorithms are different from ours and therefore they cannot solve our problem.

The feasible set in DCM bandits is combinatorial, any list of $K$ items out of $L$ is feasible, and the learning agent observes the weights of individual items. This setting is similar to stochastic combinatorial semi-bandits, which are often studied with linear reward functions \cite{gai12combinatorial,chen13combinatorial,kveton14matroid,kveton15tight,wen15efficient,combes15combinatorial}. The differences in our work are that the reward function is non-linear and that the feedback model is less than semi-bandit, because the learning agent does not observe the attraction weights of all recommended items.


\section{Conclusions}
\label{sec:conclusions}

In this paper, we study a learning variant of the dependent click model, a popular click model in web search \cite{chuklin15click}. We propose a practical online learning algorithm for solving it, $\dcmklucb$, and prove gap-dependent upper bounds on its regret. The design and analysis of our algorithm are challenging because the learning agent does not observe rewards. Therefore, we propose an additional assumption that allows us to learn efficiently. Our analysis relies on a novel reduction to a single-click model, which still preserves the multi-click character of our model. We evaluate $\dcmklucb$ on several problems and observe that it performs well even when our modeling assumptions are violated.

We leave open several questions of interest. For instance, the upper bound in \cref{thm:simple upper bound} is linear in the common termination probability $\gamma$. However, Figure~\ref{fig:1}b shows that the regret of $\dcmklucb$ is not linear in $\gamma$ for $p > 1 / K$. This indicates that our upper bounds can be improved. We also believe that our approach can be contextualized, along the lines of \citet{zong16cascading}; and extended to more complex cascading models, such as influence propagation in social networks, along the lines of \citet{wen16influence}.

To the best of our knowledge, this paper presents the first practical and regret-optimal online algorithm for learning to rank with multiple clicks in a cascade-like click model. We believe that our work opens the door to further developments in other, perhaps more complex and complete, instances of learning to rank with multiple clicks.

\subsubsection*{Acknowledgments}

This work was supported by the Alberta Innovates Technology Futures and NSERC.

\clearpage

\bibliographystyle{icml2016}
\bibliography{References}


\clearpage
\onecolumn
\appendix

\section{Proofs}
\label{sec:appendix}

\newtheorem*{lem:linear upper bound}{\cref{lem:linear upper bound}}
\begin{lem:linear upper bound}
Let $x, y \in [0, 1]^K$ satisfy $x \geq y$. Then
\begin{align*}
  V(x) - V(y) \leq \sum_{k = 1}^K x_k - \sum_{k = 1}^K y_k\,.
\end{align*}
\end{lem:linear upper bound}
\begin{proof}
Let $x = (x_1, \dots, x_K)$ and
\begin{align*}
  d(x) =
  \sum_{k = 1}^K x_k - V(x) =
  \sum_{k = 1}^K x_k - \left[1 - \prod_{k = 1}^K (1 - x_k)\right]\,.
\end{align*}
Our claim can be proved by showing that $d(x) \geq 0$ and $\frac{\partial}{\partial x_i} d(x) \geq 0$, for any $x \in [0, 1]^K$ and $i \in [K]$. First, we show that $d(x) \geq 0$ by induction on $K$. The claim holds trivially for $K = 1$. For any $K \geq 2$,
\begin{align*}
  d(x) =
  \sum_{k = 1}^{K - 1} x_k - \left[1 - \prod_{k = 1}^{K - 1} (1 - x_k)\right] +
  \underbrace{x_K - x_K \prod_{k = 1}^{K - 1} (1 - x_k)}_{\geq 0} \geq
  0\,,
\end{align*}
where $\displaystyle \sum_{k = 1}^{K - 1} x_k - \left[1 - \prod_{k = 1}^{K - 1} (1 - x_k)\right] \geq 0$ holds by our induction hypothesis. Second, we note that
\begin{align*}
  \frac{\partial}{\partial x_i} d(x) = 1 - \prod_{k \neq i} (1 - x_k) \geq 0\,.
\end{align*}
This concludes our proof.
\end{proof}

\newtheorem*{lem:linear lower bound}{\cref{lem:linear lower bound}}
\begin{lem:linear lower bound}
Let $x, y \in [0, p_{\max}]^K$ satisfy $x \geq y$. Then
\begin{align*}
  \alpha \left[\sum_{k = 1}^K x_k - \sum_{k = 1}^K y_k\right] \leq V(x) - V(y)\,,
\end{align*}
where $\alpha = (1 - p_{\max})^{K - 1}$.
\end{lem:linear lower bound}
\begin{proof}
Let $x = (x_1, \dots, x_K)$ and
\begin{align*}
  d(x) =
  V(x) - \alpha \sum_{k = 1}^K x_k =
  1 - \prod_{k = 1}^K (1 - x_k) - (1 - p_{\max})^{K - 1} \sum_{k = 1}^K x_k\,.
\end{align*}
Our claim can be proved by showing that $d(x) \geq 0$ and $\frac{\partial}{\partial x_i} d(x) \geq 0$, for any $x \in [0, p_{\max}]^K$ and $i \in [K]$. First, we show that $d(x) \geq 0$ by induction on $K$. The claim holds trivially for $K = 1$. For any $K \geq 2$,
\begin{align*}
  d(x) =
  1 - \prod_{k = 1}^{K - 1} (1 - x_k) - (1 - p_{\max})^{K - 1} \sum_{k = 1}^{K - 1} x_k +
  \underbrace{x_K \prod_{k = 1}^{K - 1} (1 - x_k) - x_K (1 - p_{\max})^{K - 1}}_{\geq 0} \geq
  0\,,
\end{align*}
where $\displaystyle 1 - \prod_{k = 1}^{K - 1} (1 - x_k) - (1 - p_{\max})^{K - 1} \sum_{k = 1}^{K - 1} x_k \geq 0$ holds because $\displaystyle 1 - \prod_{k = 1}^{K - 1} (1 - x_k) - (1 - p_{\max})^{K - 2} \sum_{k = 1}^{K - 1} x_k \geq 0$, which holds by our induction hypothesis; and the remainder is non-negative because $1 - x_k \geq 1 - p_{\max}$ for any $k \in [K]$. Second, note that
\begin{align*}
  \frac{\partial}{\partial x_i} d(x) = \prod_{k \neq i} (1 - x_k) - (1 - p_{\max})^{K - 1} \geq 0\,.
\end{align*}
This concludes our proof.
\end{proof}

\newtheorem*{lem:decreasing order}{\cref{lem:decreasing order}}
\begin{lem:decreasing order}
Let $x \in [0, 1]^K$ and $x' $ be the permutation of $x$ whose entries are in decreasing order, $x'_1 \geq \ldots \geq x'_K$. Let the entries of $c \in [0, 1]^K$ be in decreasing order. Then
\begin{align*}
  V(c \odot x') - V(c \odot x) \leq \sum_{k = 1}^K c_k x'_k - \sum_{k = 1}^K c_k x_k\,.
\end{align*}
\end{lem:decreasing order}
\begin{proof}
Note that our claim is equivalent to proving
\begin{align*}
  1 - \prod_{k = 1}^K (1 - c_k x'_k) - \left[1 - \prod_{k = 1}^K (1 - c_k x_k)\right] \leq
  \sum_{k = 1}^K c_k x'_k - \sum_{k = 1}^K c_k x_k\,.
\end{align*}
If $x = x'$, our claim holds trivially. If $x \neq x'$, there must exist indices $i$ and $j$ such that $i < j$ and $x_i < x_j$. Let $\tilde{x}$ be the same vector as $x$ where entries $x_i$ and $x_j$ are exchanged, $\tilde{x}_i = x_j$ and $\tilde{x}_j = x_i$. Since $i < j$, $c_i \geq c_j$. Let
\begin{align*}
  X_{\mhyphen i, \mhyphen j} = \prod_{k \neq i, j}(1 - c_k x_k)\,.
\end{align*}
Then
\begin{align*}
  1 - \prod_{k = 1}^K (1 - c_k x'_k) - \left[1 - \prod_{k = 1}^K (1 - c_k x_k)\right]
  & = X_{\mhyphen i, \mhyphen j} \left((1 - c_i x_i) (1 - c_j x_j) - (1 - c_i \tilde{x}_i) (1 - c_j \tilde{x}_j)\right) \\
  & = X_{\mhyphen i, \mhyphen j} \left((1 - c_i x_i) (1 - c_j x_j) - (1 - c_i x_j) (1 - c_j x_i)\right) \\
  & = X_{\mhyphen i, \mhyphen j} \left(- c_i x_i - c_j x_j + c_i x_j + c_j x_i\right) \\
  & = X_{\mhyphen i, \mhyphen j} (c_i - c_j) (x_j - x_i) \\
  & \leq (c_i - c_j) (x_j - x_i) \\
  & = c_i x_j + c_j x_i - c_i x_i - c_j x_j \\
  & = c_i \tilde{x}_i + c_j \tilde{x}_j - c_i x_i - c_j x_j \\
  & = \sum_{k = 1}^K c_k \tilde{x}_k - \sum_{k = 1}^K c_k x_k\,,
\end{align*}
where the inequality is by our assumption that $(c_i - c_j) (x_j - x_i) \geq 0$. If $\tilde{x} = x'$, we are finished. Otherwise, we repeat the above argument until $x = x'$.
\end{proof}

\newtheorem*{lem:regret lower bound}{\cref{lem:regret lower bound}}
\begin{lem:regret lower bound}
Let $x, y \in [0, 1]^K$ satisfy $x \geq y$. Let $\gamma \in [0, 1]$. Then
\begin{align*}
  V(\gamma x) - V(\gamma y) \geq
  \gamma [V(x) - V(y)]\,.
\end{align*}
\end{lem:regret lower bound}
\begin{proof}
Note that our claim is equivalent to proving
\begin{align*}
  \prod_{k = 1}^K (1 - \gamma y_k) - \prod_{k = 1}^K (1 - \gamma x_k) \geq
  \gamma \left[\prod_{k = 1}^K (1 - y_k) - \prod_{k = 1}^K (1 - x_k)\right]\,.
\end{align*}
The proof is by induction on $K$. To simplify exposition, we define the following shorthands
\begin{align*}
  X_i = \prod_{k = 1}^i (1 - x_k)\,, \quad
  X^\gamma_i = \prod_{k = 1}^i (1 - \gamma x_k)\,, \quad
  Y_i = \prod_{k = 1}^i (1 - y_k)\,, \quad
  Y^\gamma_i = \prod_{k = 1}^i (1 - \gamma y_k)\,.
\end{align*}
Our claim holds trivially for $K = 1$ because
\begin{align*}
  (1 - \gamma y_1) - (1 - \gamma x_1) = \gamma [(1 - y_1) - (1 - x_1)]\,.
\end{align*}
To prove that the claim holds for any $K$, we first rewrite $Y^\gamma_K - X^\gamma_K$ in terms of $Y^\gamma_{K - 1} - X^\gamma_{K - 1}$ as
\begin{align*}
  Y^\gamma_K - X^\gamma_K
  & = (1 - \gamma y_K) Y^\gamma_{K - 1} - (1 - \gamma x_K) X^\gamma_{K - 1} \\
  & = Y^\gamma_{K - 1} - \gamma y_K Y^\gamma_{K - 1} - X^\gamma_{K - 1} +
  \gamma y_K X^\gamma_{K - 1} + \gamma (x_K - y_K) X^\gamma_{K - 1} \\
  & = (1 - \gamma y_K) (Y^\gamma_{K - 1} - X^\gamma_{K - 1}) + \gamma (x_K - y_K) X^\gamma_{K - 1}\,.
\end{align*}
By our induction hypothesis, $Y^\gamma_{K - 1} - X^\gamma_{K - 1} \geq \gamma (Y_{K - 1} - X_{K - 1})$. Moreover, $X^\gamma_{K - 1} \geq X_{K - 1}$ and $1 - \gamma y_K \geq 1 - y_K$. We apply these lower bounds to the right-hand side of the above equality and then rearrange it as
\begin{align*}
  Y^\gamma_K - X^\gamma_K
  & \geq \gamma (1 - y_K) (Y_{K - 1} - X_{K - 1}) + \gamma (x_K - y_K) X_{K - 1} \\
  & = \gamma [(1 - y_K) Y_{K - 1} - (1 - y_K + y_K - x_K) X_{K - 1}] \\
  & = \gamma [Y_K - X_K]\,.
\end{align*}
This concludes our proof.
\end{proof}

\end{document}